\newtheorem{assumption}{Assumption}
\newtheorem{definition}{Definition}
\newtheorem{lemma}{Lemma}
\newenvironment{proofsketch}[1][Proof Sketch]{\par\noindent\textit{#1.} }{\hfill$\square$\par}
\newtheorem{theorem}{Theorem}
\newtheorem{corollary}{Corollary}
\newtheorem*{remark}{Remark}
\title{Enhancing DPSGD via Per-Sample Momentum and Low-Pass Filtering}
\author {
    % Authors
    Xincheng Xu\textsuperscript{\rm 1},
    Thilina Ranbaduge\textsuperscript{\rm 2},
    Qing Wang\textsuperscript{\rm 1},
    Thierry Rakotoarivelo\textsuperscript{\rm 2},
    David Smith\textsuperscript{\rm 2}
}
\begin{document}

\maketitle

\begin{abstract}
Differentially Private Stochastic Gradient Descent (DPSGD) is widely used to train deep neural networks with formal privacy guarantees. However, the addition of differential privacy (DP) often degrades model accuracy by introducing both noise and bias. Existing techniques typically address only one of these issues, as reducing DP noise can exacerbate clipping bias and vice-versa. In this paper, we propose a novel method, \emph{DP-PMLF}, which integrates per-sample momentum with a low-pass filtering strategy to simultaneously mitigate DP noise and clipping bias. Our approach uses per-sample momentum to smooth gradient estimates prior to clipping, thereby reducing sampling variance. It further employs a post-processing low-pass filter to attenuate high-frequency DP noise without consuming additional privacy budget. We provide a theoretical analysis demonstrating an improved convergence rate under rigorous DP guarantees, and our empirical evaluations reveal that DP-PMLF significantly enhances the privacy-utility trade-off compared to several state-of-the-art DPSGD variants.    
\end{abstract}

% Uncomment the following to link to your code, datasets, an extended version or similar.
% You must keep this block between (not within) the abstract and the main body of the paper.
% \begin{links}
%     \link{Code}{https://aaai.org/example/code}
%     \link{Extended version}{https://aaai.org/example/extended-version}
% \end{links}

\section{Introduction}
Deep learning has achieved remarkable success in various domains, such as medical diagnosis~\citep{aggarwal2021, chen2022}, recommendation systems~\citep{chen2023a,fu2023}, and autonomous driving~\citep{bachute2021, chib2023}. However, training deep models often requires large amounts of sensitive data, raising privacy concerns. Recent research has shown that trained models not only could reveal the presence of individuals in a dataset~\citep{choquette2021, olatunji2021}, but are also vulnerable to model inversion or reconstruction attacks~\citep{zhao2021, wang2021a, nguyen2023}. 
%These vulnerabilities highlight the pressing need for training methods that robustly secure sensitive data without compromising model performance.

\textit{Differential Privacy (DP)}~\citep{dwork2014} has become the de facto standard for privacy-preserving deep learning~\citep{tanuwidjaja2020,boulemtafes2020}, offering formal privacy guarantees for training data. Among various DP training algorithms, \textit{Differentially Private Stochastic Gradient Descent (DPSGD)}~\citep{abadi2016} is one of the most widely used methods for training deep neural networks with privacy guarantees. DPSGD enforces ($\epsilon$,$\delta$)-DP through two key mechanisms: (1) \emph{gradient clipping}, which bounds the $\ell_2$ norm of individual gradients to limit the influence of any single training sample, and (2) \emph{noise injection}, where Gaussian noise calibrated to the privacy budget $\epsilon$ and failure factor $\delta$ is added to the aggregated gradients.  However, DPSGD faces a challenging privacy-utility trade-off: per-sample gradient clipping can impede convergence, and the added noise can significantly degrade model performance~\citep{fang2023}. 
%This trade-off often leads to a substantial reduction in model utility, making privacy-preserving deep learning a difficult problem to solve.

To improve the utility of DPSGD, recent works have proposed various strategies, such as adaptively adjusting the clipping threshold~\citep{andrew2021,bu2024,xia2023}, dynamically allocating the privacy budget~\citep{lee2018,yu2019,chen2023b}, projecting gradients into low-dimensional spaces~\citep{zhou2020, yu2021a, yu2021b, asi2021, yu2021c}, designing models less sensitive to DP noise~\citep{papernot2021,wang2021b,shamsabadi2023}, and incorporating public data~\citep{li2022, amid2022, golatkar2022}. Despite these advances, these methods face practical challenges: some lack rigorous theoretical guarantees, others are limited to specific model architectures, or require access to public data, all of which hinder the feasibility of DPSGD in real-world applications.

Beyond these practical challenges, a key theoretical concern is the convergence behavior of DPSGD, which is influenced by two factors: DP noise and clipping bias. Generally, selecting a smaller clipping threshold reduces injected DP noise, minimizing its scale but increasing the clipping bias. Conversely, a larger clipping threshold lowers clipping bias but requires injecting more DP noise to maintain privacy guarantees, which potentially leads to significant performance degradation. 

Many existing methods attempt to mitigate one effect at the expense of the other. For example, Zhang~\textit{et al.}~\citep{zhang2024} applies a low-pass filter to separate DP noise from the true gradient signal, but introduces an additional bias term in the convergence rate. DiceSGD~\citep{zhang2023} utilizes an error-feedback mechanism to correct bias but needs additional DP noise to protect the residual gradient information. Recent studies~\citep{koloskova2023a, xiao2023} suggest that clipping bias is not strictly related to clipping threshold, but is also influenced by sampling variance $\sigma_{SGD}$. This insight inspired our work, as it suggests that it is possible to simultaneously reduce both DP noise and clipping bias, thereby enhancing the overall utility of DPSGD.

In this work, we propose a novel method, \textit{DP-PMLF}, which mitigates both clipping bias and DP noise in DPSGD by integrating per-sample momentum and a low-pass filter. First, per-sample momentum is employed to average historical gradients, thereby reducing the sampling variance and bias introduced by gradient clipping. Second, a low-pass filter is applied as a post-processing step to suppress high-frequency DP noise while preserving the essential low-frequency gradient signal. Our theoretical analysis illustrates an improved convergence guarantee compared to DPSGD under some assumptions, while providing strong privacy guarantees. Empirical results demonstrate our method outperforms resent state-of-the-art techniques. Our main contributions are threefold:

%\vspace{0.2cm}
%\noindent\textbf{Contributions.~}Our main contributions are threefold:
\begin{itemize}
    \item We propose a novel DPSGD method that simultaneously addresses DP noise and clipping bias through the integration of per-sample momentum and low-pass filtering. To the best of our knowledge, our approach is the first to consider reducing the effect of DP noise and clipping bias simultaneously.
    \item We theoretically prove DP-PMLF achieves faster convergence compared to the vanilla DPSGD, while maintaining a mathematically proven privacy guarantee.
    \item Empirical results on different benchmarks demonstrate that our approach achieves a better privacy-utility trade-off compared to various existing state-of-art DPSGD variants across different models and privacy levels. 
\end{itemize}

\iffalse
The remainder of this paper is organized as follows. Section~\ref{sec.rw} reviews related work to improve DPSGD. Section~\ref{sec.p} provides preliminaries on empirical risk minimization, differential privacy and DPSGD. Section~\ref{sec.tpm} introduces two existing state-of-the-art methods and their limitations. Section~\ref{sec.m} presents our methodology and algorithm design. Section~\ref{sec.ta} provides theoretical analysis of convergence and privacy guarantee. Section~\ref{sec.er} presents extensive experimental results supporting the performance of our approach. Finally, Section~\ref{sec.c} concludes with a discussion and future directions.
\fi

\section{Related Work}
\label{sec.rw}
Following the foundational DPSGD work by Abadi~\textit{et al.}~\citep{abadi2016}, 
existing variants on DPSGD can be broadly categorized into two directions: DP noise reduction and clipping bias reduction.

\textbf{DP Noise Reduction: }
To mitigate the effect of DP noise, existing approaches commonly use the following techniques: adaptive clipping threshold~\citep{andrew2021,bu2024,xia2023}, privacy budget allocation~\citep{lee2018,yu2019,chen2023b}, low-rank projection~\citep{zhou2020, yu2021a, yu2021b, asi2021, yu2021c}, specific model design~\citep{papernot2021,wang2021b,shamsabadi2023}, public data assistant~\citep{li2022, amid2022, golatkar2022}.

However, these methods often lack theoretical guarantees, have limited applicability to specific model architectures, or require access to public data for training. To solve these limitations, Zhang~\textit{et al.}~\citep{zhang2024} proposed the introduction of a low-pass filter as a post-processing step in DP optimizers. They demonstrated that low-pass filtering effectively suppresses high-frequency DP noise while preserving essential gradient information.

\textbf{Clipping Bias Reduction: }
%Recent studies have identified that the performance of DPSGD is influenced not only by the injected DP noise to preserve privacy, but also by the bias introduced through gradient clipping operations. 
Koloskova~\textit{et al.}~\citep{koloskova2023a} demonstrated that DPSGD converges with a constant bias term, irrespective of the chosen clipping threshold or the learning rate. %Consequently, DPSGD can only converge to a neighborhood around the optimal solution. 
Chen~\textit{et al.}~\citep{chen2020} also proposed a geometric analysis framework that quantifies gradient clipping bias by measuring the disparity between gradient distributions and symmetric distributions, and a technique to add Gaussian noise to gradients before the clipping operation when gradients are highly asymmetric.

Xiao~\textit{et al.}~\citep{xiao2023} found that the clipping bias is proportional to the sampling variance $\sigma_{SGD}$. The authors proposed to reduce the clipping bias using inner-outer momentum, enhanced network normalization, batch clipping with public data, and data pre-processing. Zhang~\textit{et al.}~\citep{zhang2023} introduced an error-feedback mechanism, \textit{DiceSGD}, to accumulate the difference between clipped and unclipped gradients but requires more DP noise than vanilla DPSGD. 
To avoid clipping operation, Bethune~\textit{et al.}~\citep{bethune2023} proposed \textit{Clipless DPSGD}. This method utilizes Lipschitz-constrained neural networks, which analytically compute gradient sensitivity bounds using projection operations and gradient norm-preserving networks with orthogonal weights. However, \textit{Clipless DPSGD} relies on specific model architectures which limit its practical deployment. 

%We observed that prior work has focused on a single aspect of improving DPSGD performance, either reducing differential privacy noise or mitigating clipping bias. 
%In contrast, our proposed method, DP-PMLF, simultaneously addresses both challenges. We provide robust theoretical guarantees accompanied by improved empirical results. Moreover, our approach can be applied to any deep learning model without the need for public dataset assistance.

\section{Preliminaries}
\label{sec.p}

%This section introduces the concepts used in our work. 
%Emprirical Risk Minimization (ERM) is a key problem in the training of deep neural networks, and Stochastic Gradient Descent (SGD) is an approach to address it. Private variants of SGD, such as Differentially Private SGD (DPSGD), introduce noise into the ERM optimization process, leading to decreased model utility.
% In this section, we introduce the key concepts and notation used throughout our work.
%
We begin by outlining the Empirical Risk Minimization (ERM) problem along with its standard assumptions, and then provide an overview of DP.
%\smallskip
%Table~\ref{tab.n} summarizes the notation, divided into three parts: basic symbols, algorithm-specific terms, and additional notations for our theoretical analysis.

\noindent
\subsection{Empirical Risk Minimization (ERM):}
In this paper, we focus on differentially private optimization developed within the Empirical Risk Minimization (ERM) framework, which forms the basis for supervised deep learning. 
%Incorporating noise into the ERM optimization process to ensure differential privacy compromises model utility. Our approach aims to achieve a better privacy-utility trade-off.
%The definition of ERM is as follows.
Let $D$ be a dataset of $n$ samples, where each sample $\xi$ is drawn from some underlying distribution. In empirical risk minimization (ERM), we seek a parameter vector $x \in \mathbb{R}^d$ that minimizes the average loss:
\begin{equation}\label{eq:erm}
    \min_{x \in \mathbb{R}^d} f(x) \quad \text{with} \quad f(x) = \frac{1}{n} \sum_{\xi \in D} f(x,\xi),
\end{equation}
where $f(x,\xi)$ is the loss incurred on sample $\xi$. 

For clarity, we denote by $\nabla f^{(\xi)}(x) \equiv \nabla_x f(x,\xi)$ the gradient with respect to \(x\) computed on a single sample \(\xi\), \(\|\cdot\|\) denotes the Euclidean norm on \(\mathbb{R}^d\), and \(T\) denote the total number of iterations of the optimization algorithm. We then introduce the following assumptions used in our work:

\begin{assumption}[$L$-Smoothness]\label{ass.lsmo}
A differentiable function $f : \mathbb{R}^d \to \mathbb{R}$ is said to be $L$-smooth if 
it satisfies, for all $x, y \in \mathbb{R}^d$:
\begin{equation*}
    \|\nabla f(x) - \nabla f(y)\| \leq L \|x - y\|.
\end{equation*}
\end{assumption}

\begin{assumption}[Bounded Variance]\label{ass.var}
The per-sample gradient has bounded variance, i.e.,
\begin{equation*}
    \mathbb{E}\left[\|\nabla f^{(\xi)}(x) - \nabla f(x)\|^2\right] \leq \sigma_{SGD}^2, \quad \forall x \in \mathbb{R}^d.
\end{equation*}

\end{assumption}
Here, the expectation is taken with respect to the sampling of \(\xi\) and \(\sigma_{SGD}\) is a constant representing the variance bound.

\begin{assumption}[Bounded Gradient]\label{ass.bod}
The per-sample gradient has a bounded norm, i.e.,
\begin{equation*}
    \|\nabla f^{(\xi)}(x)\| \le G, \quad \forall x \in \mathbb{R}^d, \; \xi \in D,
\end{equation*}
\end{assumption}
where $G$ is a positive constant.

\begin{assumption}[Gradient Auto-Correlation]\label{ass.corr}
For all $t \in \{0, \dots, T-1\}$, there exist sequences $\{c_r\}$ and $\{c_{-r}\}$ with $c_r\geq 0$, and $\forall r \geq 0$,  such that  
\begin{align*}
&\langle \nabla f(x_t), \nabla f(x_{t-r}) \rangle \geq c_r \|\nabla f(x_t)\|^2 + c_{-r} \|\nabla f(x_{t-r})\|^2. 
\end{align*}
\end{assumption}

\begin{assumption}[Independent Sampling Noise]\label{ass.ind}
Let $\zeta_i^{(\xi)} =  \nabla f^{(\xi)}(x_{i}) - \nabla f(x_{i})$ represent the sampling noise from the sample $\xi$ in the $i$-th iteration. If $i \neq j$, then the following condition holds:
    \begin{equation*}
        \mathbb{E}\left[\left(\zeta_i^{(\xi)}\right)^T \zeta_{j}^{(\xi)}\right] = 0.
    \end{equation*}
\end{assumption}

Assumption~\ref{ass.lsmo} is a widely adopted smoothness condition in non-convex optimization~\citep{zaheer2018}. Assumption~\ref{ass.var} is standard in the analysis of gradient clipping~\citep{gorbunov2020}. Assumption~\ref{ass.bod} is commonly used in the DPSGD setting to control the additional bias introduced by clipping~\citep{zhang2023}. Assumptions~\ref{ass.corr} and~\ref{ass.ind} are proposed and validated in~\citep{zhang2024} and~\citep{xiao2023}, respectively.

\subsection{Differential Privacy (DP)}
DP~\citep{dwork2014} provides a privacy guarantee such that the outputs of a mechanism cannot be distinguished by the inclusion or exclusion of any single record in a dataset. Formally, DP is defined as follows:

\iffalse
\begin{definition}[Neighboring datasets]
\label{def.nd}
Two datasets \(D\) and \(D'\) are said to be \emph{neighboring datasets}, denoted by \(D \sim D'\), if \(D'\) can be obtained from \(D\) by either adding or removing exactly one record. 
\end{definition}
\fi

\begin{definition}[Differential Privacy (DP)~\citep{dwork2006}]
\label{def.cdp}
A randomized algorithm $\mathcal{M}: \mathcal{D} \to \mathcal{R}^d$ is \emph{$(\epsilon, \delta)$-DP} if for all neighboring datasets 
$D$ and $D'$, and for any output set $\mathcal{S} \subseteq \mathcal{R}^d$, we have
\begin{equation}\label{eq.DP}
    Pr[\mathcal{M}(D) \in \mathcal{S}] \leq e^\epsilon Pr[\mathcal{M}(D') \in \mathcal{S}] + \delta,
\end{equation}
where $\delta \in [0, 1]$ denotes a failure probability. 
\end{definition}

When $\delta=0$, the mechanism $\mathcal{M}$ is said to satisfy \emph{pure DP}; if $\delta>0$, it satisfies \emph{approximate DP}.

The Gaussian mechanism is widely used to achieve the DP guarantee. The definition of global sensitivity and the Gaussian mechanism are defined as follows:

\begin{definition}[Global Sensitivity]
Let $\mathcal{H}: \mathcal{D} \rightarrow \mathcal{R}^d$ be a function that maps datasets to $d$-dimensional vectors. The global sensitivity of $\mathcal{H}$ is defined as:
$$\Delta \mathcal{H} = \max_{D \sim D'} \| \mathcal{H}(D) - \mathcal{H}(D') \|.$$
\end{definition}

\begin{definition}[Gaussian Mechanism~\citep{dwork2014}]
\label{def.gaussian}
For a function \(\mathcal{H}: \mathcal{D} \to \mathcal{R}^d\) with \(\ell_2\) global sensitivity \(\Delta \mathcal{H}\), the Gaussian mechanism is defined as
\[
\mathcal{M}(D) = \mathcal{H}(D) + \mathcal{N}(0, \sigma_{DP}^2 I_d),
\]
where \(\mathcal{N}(0, \sigma_{DP}^2 I_d)\) denotes the \(d\)-dimensional multivariate Gaussian distribution with mean zero and covariance matrix \(\sigma_{DP}^2 I_d\). The noise parameter is set to
\[
\sigma_{DP} = \frac{\Delta \mathcal{H} \sqrt{2 \ln(1.25/\delta)}}{\epsilon},
\]
which ensures that \(\mathcal{M}\) satisfies \((\epsilon,\delta)\)-DP.
\end{definition}

Our approach is also built upon post-processing, one fundamental DP properties:
\begin{lemma}[Post-Processing~\citep{dwork2006}]
\label{pr.pp}
Let \(\mathcal{M}: \mathcal{D} \to \mathcal{R}^d\) be an \((\epsilon,\delta)\)-DP mechanism and let \(\mathcal{H}: \mathcal{R}^d \to \mathcal{R}^d\) be any deterministic or randomized function. Then the composition \(\mathcal{H} \circ \mathcal{M}\) satisfies \((\epsilon,\delta)\)-DP.
\end{lemma}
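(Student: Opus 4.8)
The plan is to reduce the randomized case to the deterministic case and then invoke Definition~\ref{def.cdp} directly. First I would treat a \emph{deterministic} map $\mathcal{H}:\mathcal{R}^d\to\mathcal{R}^d$. Fix any neighboring datasets $D\sim D'$ and any measurable output set $\mathcal{S}\subseteq\mathcal{R}^d$. Since $\mathcal{H}$ is (assumed) measurable, its preimage $\mathcal{T}:=\mathcal{H}^{-1}(\mathcal{S})=\{r\in\mathcal{R}^d:\mathcal{H}(r)\in\mathcal{S}\}$ is a measurable subset of $\mathcal{R}^d$, and the event $\{\mathcal{H}(\mathcal{M}(D))\in\mathcal{S}\}$ coincides with $\{\mathcal{M}(D)\in\mathcal{T}\}$. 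Hence
\begin{equation*}
\Pr[\mathcal{H}(\mathcal{M}(D))\in\mathcal{S}]=\Pr[\mathcal{M}(D)\in\mathcal{T}]
\le e^{\epsilon}\Pr[\mathcal{M}(D')\in\mathcal{T}]+\delta
= e^{\epsilon}\Pr[\mathcal{H}(\mathcal{M}(D'))\in\mathcal{S}]+\delta,
\end{equation*}
where the middle inequality is exactly the $(\epsilon,\delta)$-DP guarantee of $\mathcal{M}$ applied to the set $\mathcal{T}$. This establishes $(\epsilon,\delta)$-DP for $\mathcal{H}\circ\mathcal{M}$ in the deterministic case.

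Next I would extend to a \emph{randomized} post-processing map. The standard device is to represent $\mathcal{H}$ as a mixture of deterministic maps: there is a random seed $\omega$ drawn from some distribution $\mu$ (independent of $\mathcal{M}$ and of the data) such that $\mathcal{H}(r)$ is distributed as $h_\omega(r)$ for a family of deterministic measurable maps $\{h_\omega\}$. Conditioning on $\omega$, the previous paragraph gives, for every fixed $\omega$,
\begin{equation*}
\Pr[h_\omega(\mathcal{M}(D))\in\mathcal{S}]\le e^{\epsilon}\Pr[h_\omega(\mathcal{M}(D'))\in\mathcal{S}]+\delta.
\end{equation*}
Taking expectation over $\omega\sim\mu$ and using linearity of expectation (the right-hand side is affine in the conditional probabilities, and $\delta$ does not depend on $\omega$, so it is unchanged by averaging) yields
\begin{equation*}
\Pr[\mathcal{H}(\mathcal{M}(D))\in\mathcal{S}]=\mathbb{E}_\omega\big[\Pr[h_\omega(\mathcal{M}(D))\in\mathcal{S}]\big]\le e^{\epsilon}\,\mathbb{E}_\omega\big[\Pr[h_\omega(\mathcal{M}(D'))\in\mathcal{S}]\big]+\delta = e^{\epsilon}\Pr[\mathcal{H}(\mathcal{M}(D'))\in\mathcal{S}]+\delta.
\end{equation*}
Since $D\sim D'$ and $\mathcal{S}$ were arbitrary, $\mathcal{H}\circ\mathcal{M}$ is $(\epsilon,\delta)$-DP.

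I expect the only real subtlety — and the part worth stating carefully rather than the routine inequality chasing — to be the measure-theoretic bookkeeping: ensuring $\mathcal{H}$ is measurable so that $\mathcal{H}^{-1}(\mathcal{S})$ is a legitimate event, and justifying the mixture representation of a randomized channel together with the Fubini/Tonelli interchange used when averaging over the seed $\omega$. For the discrete output setting typically used in the DPSGD analysis these concerns are vacuous, so in the paper it should suffice to present the deterministic computation and remark that the randomized case follows by averaging over the internal coins of $\mathcal{H}$.
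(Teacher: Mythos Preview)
Your argument is correct and is exactly the standard textbook proof of post-processing immunity. Note, however, that the paper does not actually prove this lemma: it is stated with a citation to \citep{dwork2006} and used as a black box in the privacy analysis, so there is no in-paper proof to compare against.
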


\section{Motivation}
\label{sec.tpm}
%\x{Refer to motivation in https://arxiv.org/pdf/2212.00328}.
% \begin{figure}[t!]
%     \centering
%     \includegraphics[width=0.23\textwidth]{./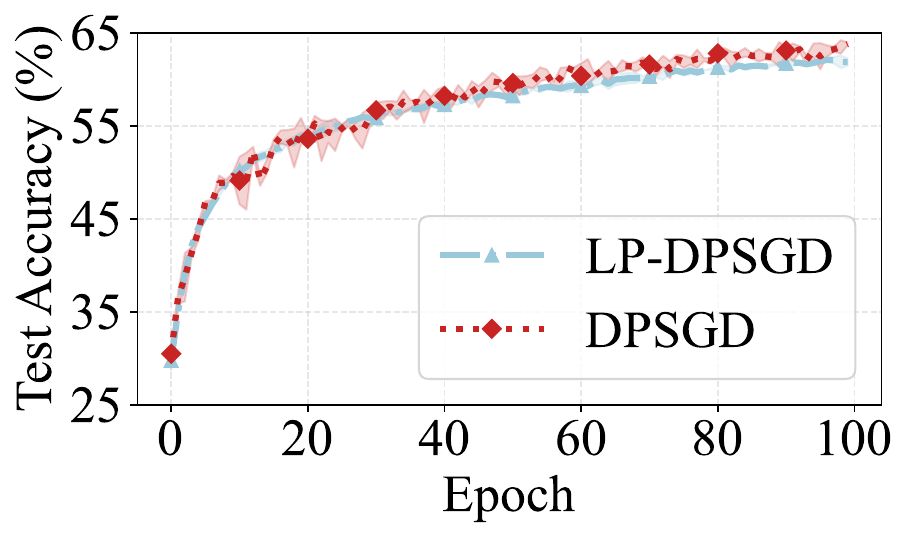}
%     \includegraphics[width=0.23\textwidth]{./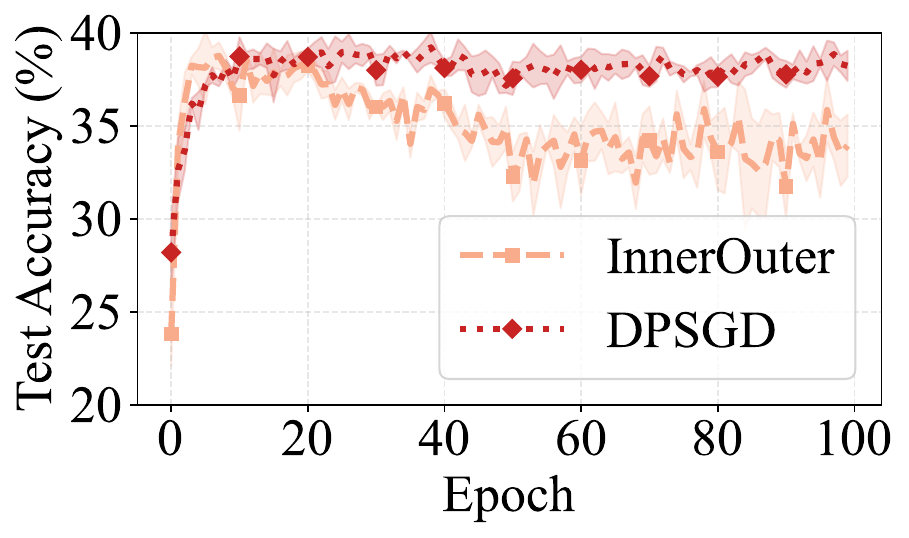}
%     %\begin{subfigure}[b]{0.46\textwidth}
%     %    \centering
%     %    \caption{DPSGD and LP-DPSGD with $\epsilon = 8$.}
%     %    \label{fig.example1}
%     %\end{subfigure}
%     %\hfill
%     %\begin{subfigure}[b]{0.46\textwidth}
%     %    \centering
%     %    \caption{DPSGD and InnerOuter with $\epsilon = 1$.}
%     %    \label{fig.example2}
%     %\end{subfigure}
%     \caption{Test accuracy (\%) of LP-DPSDG (left) and InnerOuter (right) 
%     against DPSGD on CIFAR-10 with a 5-Layer CNN over 100 epochs with $\epsilon=8$ and 
%     $\epsilon=1$, respectively.}
%     \label{fig.example}
% \end{figure}

\begin{figure}[t]
    \centering
    \begin{subfigure}[b]{0.3\textwidth}
        \centering
        \includegraphics[width=\textwidth]{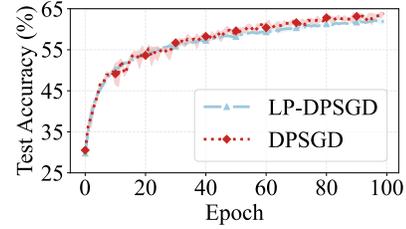}
        \caption{DPSGD and LP-DPSGD with $\epsilon = 8$.}
        \label{fig.example1}
    \end{subfigure}
    \hfill
    \begin{subfigure}[b]{0.3\textwidth}
        \centering
        \includegraphics[width=\textwidth]{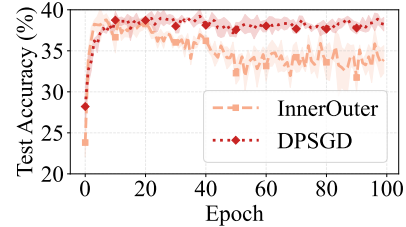}
        \caption{DPSGD and InnerOuter with $\epsilon = 1$.}
        \label{fig.example2}
    \end{subfigure}
    \caption{Test accuracy (\%) comparison of DPSGD and two existing methods on CIFAR-10 with a 5-Layer CNN over 100 epochs under different privacy budgets ($\epsilon$).}
    \label{fig.example}
\end{figure}
%In this section, we introduce two existing methods that address distinct challenges in DPSGD - one aimed at reducing DP noise and the other at mitigating clipping bias. We then discuss the limitations of each method, noting that in some scenarios, their performance can be even worse than DPSGD. 

%\vspace{0.2cm}
\noindent\textbf{Reducing DP Noise.} As mentioned in Section~\ref{sec.rw},
different works have been proposed to mitigate the effects of DP noise. 
The state-of-the-art work by Zhang et al.~\cite{zhang2024}, 
named \textit{LP-DPSGD}, was 
proposed to preserve the integrity of the true gradient signals while mitigating the impact of DP noise. 
%The authors observed that in the frequency domain, true gradient signals are primarily concentrated in the low-frequency range, whereas DP noise is uniformly distributed across all frequency components. Leveraging this observation, 
LP-DPSGD employs a low-pass filter to process gradients, effectively retaining 
low-frequency gradient signals while suppressing high-frequency noise, there by 
improving the signal-to-noise ratio of the gradients.

Although LP-DPSGD reduces the impact of DP noise, it introduces an increased bias as a trade-off. As shown by Koloskova et al.~\cite{koloskova2023a}, if the true gradient is sufficiently large, it can offset the sampling variance $\sigma_{SGD}$, thus preventing clipping bias. However, when incorporating a low-pass filter, this clipping bias cannot be eliminated, as the true gradients from different training iterations could be (negatively) correlated or even completely uncorrelated. 
%As shown in Appendix~\ref{app.dd}, compared to DPSGD, LP-DPSGD introduces an additional constant bias term to $\mathbb{E}[\|\nabla f(x_t)\|^2]$. 

As can be seen in Figure~\ref{fig.example} (a), the performance of LP-DPSGD is even worse than that of vanilla DPSGD. This is because the impact of clipping bias outweighs that of DP noise in this scenario. As a result, while LP-DPSGD effectively suppresses DP noise, the additional bias introduced by the low-pass filter undermines the overall performance. Further details on our analysis are provided in Section~\ref{sec.ca}.

%\vspace{0.2cm}
\noindent\textbf{Mitigating Clipping Bias.~} Xiao~\textit{et al.}~\cite{xiao2023}  found that the clipping bias term is proportional to the sampling variance $\sigma_{SGD}$. 
%Therefore, drawing inspiration from SGD with momentum, 
The authors proposed the \textit{DPSGD with Inner-Outer Momentum} (abbreviated to \textit{InnerOuter} for simplicity) approach as a way of reducing clipping bias. The inner momentum smooths the gradients at the sample level by averaging the gradients in the previous training iterations before clipping, effectively reducing the impact of sampling noise. The outer momentum aggregates the clipped gradients of all samples in a batch, adds DP noise, and applies a second round of smoothing at the batch level.

However, InnerOuter does not perform well when the DP noise is large. The accumulation of historical gradients through the outer momentum operation cannot separate the true gradient and DP noise signals. This accumulates more DP noise which leads to 
inaccurate gradient estimates in training iterations. As seen in 
Figure~\ref{fig.example} (b), InnerOuter is not effective where DP noise dominates the signal. Furthermore, the authors did not provide theoretical proofs leaving InnerOuter's convergence insufficiently validated.

\section{Our Proposed Approach}
\label{sec.m}
%In this section, we first present our key ideas for addressing the challenges of DP noise and clipping bias in DPSGD. Then, we propose a novel algorithm that integrates these ideas in a mathematically rigorous manner.
Now we provide details of DP-PMLF. Our approach is built on two complementary ideas. %\begin{itemize}[leftmargin=4.5mm] \item 
\textbf{(1) Per-sample Momentum:} By maintaining a momentum term for each sample, we average historical gradients over a window of $k$ iterations. This \emph{per-sample momentum} reduces sampling variance and mitigates clipping bias by smoothing out fluctuations before the clipping step. 
%\item 
\textbf{(2) Low-pass Filter:} DP noise is evenly distributed among all frequency components, while true gradient signals concentrate in low frequencies. By applying a linear low-pass filter to the aggregated noisy momentum, we suppress high-frequency noise while preserving the true low-frequency components. 
%\end{itemize}

Together, these ideas balance the trade-off between reducing noise and controlling clipping bias. The per-sample momentum provides a more stable gradient estimate prior to clipping, and the low-pass filter further cleans the aggregated signal without consuming additional privacy budget.

%\vspace{0.2cm}
%\noindent\textbf{Algorithm Description.~} 
Our proposed method is outlined in Algorithm~\ref{alg}. The algorithm proceeds as follows:

\noindent\textbf{Per-sample Momentum Calculation (lines 1 and 5):} For each sample $\xi$, we compute a momentum term by averaging its gradients over the previous $k$ iterations using exponential decay weights. The momentum term
      \[
    v_{t}^{(\xi)} = \sum_{i=t-k+1}^{t} \hat{\beta}^{\,t-i} \nabla f^{(\xi)}(x_i). 
    \]
Here, $\hat{\beta}^{\,t-i} = \frac{\beta^{t-i}}{c_\beta} $ and $ c_\beta = \sum_{i=t-k+1}^{t} \beta^{t-i}.$ $c_\beta$ is the normalization constant that ensures that the momentum coefficients sum to one, preventing excessive accumulation of DP noise.

\noindent\textbf{Momentum Clipping and Noise Addition (lines 6 and 8):}   
 Each sample's momentum $v_{t}^{(\xi)}$ is clipped to a threshold $C$, which bounds the global sensitivity 
    %\[
    $\tilde{v}_t^{(\xi)} = \text{clip}(v_t^{(\xi)}, C)$.
    %\]
    The aggregated momentum is then computed and Gaussian noise with scale $\sigma_{DP}$ is added to each dimension of the average clipped momentum to satisfy DP guarantees:
    \[
    \bar{v}_t = \frac{1}{B} \sum_{\xi \in \mathcal{B}_t} \tilde{v}_t^{(\xi)} + w_t, \quad \text{with } w_t \sim \mathcal{N}(0,\sigma_{DP}^2 I_d).
    \]

%\item 
\noindent\textbf{Low-pass Filtering and Bias Correction (lines 9 - 11):}  
    We apply a linear low-pass filter with coefficients $\{a_r\}_{r=1}^{n_a}$ and $\{b_r\}_{r=0}^{n_b}$ to the aggregated noisy momentum:
    \[
    m_t = -\sum_{r=1}^{n_a} a_r\, m_{t-r} + \sum_{r=0}^{n_b} b_r\, \bar{v}_{t-r},
    \]
    where $m_t$ is the filtered output at time $t$, $\bar{v}_{t-r}$ represents the aggregated noisy momentum at time $t-r$, $\{a_r\}$ and $\{b_r\}$ are the filter coefficients and $n_a$ and $n_b$ determine the filter order. To ensure that after filtering the mean of the signal remains unchanged~\citep{winder2002}, the design of the filter coefficients should satisfy the following constraint:
    \begin{equation}\label{eq.constraint}
        -\sum_{r=1}^{n_a} a_r\ + \sum_{r=0}^{n_b} b_r = 1.
    \end{equation}
    
    We calculate an initialization bias correction term via
    \[
    c_{m,t} = -\sum_{r=1}^{n_a} a_r\, c_{m,t-r} + \sum_{r=0}^{n_b} b_r c_{b,t-r}.
    \]
    We normalize $m_t$ to correct the initialization bias for the filter's effect
    %\[
    $\hat{m}_t = {m_t}/{c_{m,t}}$.
    %\]
    This step smooths the signal, suppressing high-frequency DP noise while retaining the low-frequency, true momentum components.

\noindent\textbf{Model Update (line 12):}  
    Finally, the model parameters are updated using the corrected momentum
    %\[
    $x_{t+1} = x_t - \eta\, \hat{m}_t$.
    %\]
%\end{enumerate}

%Experimental results in Section~\ref{sec.tp} show that 
%the combined use of per-sample momentum and a low-pass filter effectively addresses the limitations identified in prior works. 
%the per-sample momentum smooths the gradients and reduces clipping bias, while the low-pass filter minimizes high-frequency DP noise, leading to cleaner momentum estimates. Importantly, the DP guarantees are maintained: the clipping threshold $C$ bounds the per-sample's global sensitivity, and the noise added is calibrated accordingly. The filter, being a post-processing step, does not affect the overall privacy budget consuming.

\begin{algorithm}[t!]
\caption{DP-PMLF}
\label{alg}
\begin{footnotesize}
\begin{algorithmic}[1]
\REQUIRE dataset $D$, initial model parameters $x_0$, learning rate $\eta$, momentum length $k$, 
         filter parameters $\{a_r\}_{r=1}^{n_a}$, $\{b_r\}_{r=0}^{n_b}$,
         clipping threshold $C$, noise scale $\sigma_{DP}$, batch size $B$, iteration number $T$, per-sample momentum factor $\beta$
\STATE $c_\beta = sum(\sum_{i=t-k + 1}^{t} \beta^{t-i})$
\FOR{$t = 0$ to $T-1$}
    \STATE Sample minibatch $\mathcal{B}_t$ of size $B$ from $D$
    \FOR{$\xi \in \mathcal{B}_t$}
        \STATE $v_{t}^{(\xi)} =  \sum_{i=t-k + 1}^{t} \hat{\beta}^{t-i} \nabla f^{(\xi)}(x_{i}), \text{ where }\hat{\beta}^{t-i} = \frac{\beta^{t-i}}{c_\beta}$
        \STATE $\tilde{v}_t^{(\xi)} = \text{clip}(v_t^{(\xi)}, C)$
    \ENDFOR
    
    \STATE $\bar{v}_t = \frac{1}{B} \sum_{\xi \in \mathcal{B}_t} \tilde{v}_t^{(\xi)}  + w_t, \text{ where } w_t \sim \mathcal{N}(0,\sigma_{DP}^2 I_d)$
    \STATE $m_t = -\sum_{r=1}^{n_a} a_r m_{t-r} + \sum_{r=0}^{n_b} b_r \bar{v}_{t-r}$
    \STATE $c_{b,t} = 1, c_{m,t} = -\sum_{r=1}^{n_a} a_r c_{m,t-r} + \sum_{r=0}^{n_b} b_r c_{b,t-r} $
    \STATE $\hat{m}_t = m_t/c_{m,t}$
    \STATE $x_{t+1} = x_t - \eta \hat{m}_t$
\ENDFOR
\RETURN $x_T$
\end{algorithmic}
\end{footnotesize}
\end{algorithm}

\section{Theoretical Analysis}
\label{sec.ta}
%In this section, we analyze the convergence and privacy guarantees of our proposed algorithm under the ERM setting.

%\subsection{Two Key Lemmas}
We now present key lemmas that underpin our convergence analysis.
%
%\vspace{0.2cm}
%\noindent\textbf{Filter representation and aggregation.~} 
\begin{restatable}[Effectiveness of Low-pass Filter]{lemma}{trans}
\label{lem.trans}
\begin{footnotesize}
\[
\hat{m}_t = \sum_{r=0}^t \hat{\kappa}_r\, \bar{v}_{t-r}, \text{ with}
\]
\[
\hat{\kappa}_r = \frac{\kappa_r}{\sum_{r=0}^t \kappa_r} \quad \text{and} \quad \kappa_r = \sum_{r_2=0}^{\min(n_b, r)} b_{r_2} \sum_{r_1=1}^{n_a} z_{a,r_1}\, (p_{a,r_1})^{r - r_2}.
\]    
\end{footnotesize}

\end{restatable}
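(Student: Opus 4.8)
### Proof Proposal

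The plan is to unroll the linear recursion defining $m_t$ into an explicit convolution against the history $\{\bar v_{t-r}\}$, then do the same for the deterministic correction sequence $c_{m,t}$, and finally observe that dividing one by the other produces exactly the normalized kernel $\hat\kappa_r$. The key technical device is the $z$-transform (equivalently, generating functions): the recursion $m_t = -\sum_{r=1}^{n_a} a_r m_{t-r} + \sum_{r=0}^{n_b} b_r \bar v_{t-r}$ is an ARMA/IIR filter, and its impulse response is the inverse $z$-transform of the rational transfer function $H(z) = \bigl(\sum_{r=0}^{n_b} b_r z^{-r}\bigr)\big/\bigl(1 + \sum_{r=1}^{n_a} a_r z^{-r}\bigr)$.

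First I would write the denominator polynomial $A(z) = 1 + \sum_{r=1}^{n_a} a_r z^{-r}$ and factor $1/A(z)$ by partial fractions over its roots. Denoting by $p_{a,r_1}$ the poles (roots of the characteristic polynomial) and by $z_{a,r_1}$ the associated partial-fraction residues, one gets $1/A(z) = \sum_{r_1=1}^{n_a} z_{a,r_1}/(1 - p_{a,r_1} z^{-1})$, whose inverse transform is the geometric sequence $h^{\mathrm{AR}}_j = \sum_{r_1=1}^{n_a} z_{a,r_1} (p_{a,r_1})^{j}$ for $j\ge 0$. Convolving this with the FIR numerator coefficients $\{b_{r_2}\}_{r_2=0}^{n_b}$ gives the full impulse response $\kappa_r = \sum_{r_2=0}^{\min(n_b,r)} b_{r_2}\, h^{\mathrm{AR}}_{r-r_2} = \sum_{r_2=0}^{\min(n_b,r)} b_{r_2} \sum_{r_1=1}^{n_a} z_{a,r_1} (p_{a,r_1})^{r-r_2}$, which matches the claimed $\kappa_r$ exactly. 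Hence $m_t = \sum_{r=0}^{t} \kappa_r\, \bar v_{t-r}$, assuming zero initialization of $m$ for negative indices (so the sum truncates at $r=t$).

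Second, I would run the identical argument on the correction recursion $c_{m,t} = -\sum_{r=1}^{n_a} a_r c_{m,t-r} + \sum_{r=0}^{n_b} b_r c_{b,t-r}$ with $c_{b,t}\equiv 1$. Because the driving term is the constant sequence $1$, its convolution with the same impulse response $\kappa$ is simply the partial sum: $c_{m,t} = \sum_{r=0}^{t}\kappa_r$. Then $\hat m_t = m_t / c_{m,t} = \bigl(\sum_{r=0}^t \kappa_r \bar v_{t-r}\bigr)\big/\bigl(\sum_{r=0}^t \kappa_r\bigr) = \sum_{r=0}^t \hat\kappa_r \bar v_{t-r}$, which is the statement. (This also makes transparent why the normalization constraint \eqref{eq.constraint} matters: it guarantees $\sum_{r=0}^\infty \kappa_r = 1$, so $c_{m,t}\to 1$ and $\hat m_t$ converges to an honest weighted average rather than a rescaled one.)

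The main obstacle I anticipate is the bookkeeping around the partial-fraction step, specifically the assumption that the poles $p_{a,r_1}$ are distinct (simple roots) — if $A(z)$ has repeated roots the residue expansion picks up polynomial-in-$r$ factors and the clean geometric form breaks. I would either state simple-pole-ness as a standing assumption on the filter design, or argue by a limiting/density argument that the final identity holds regardless. A secondary, more mundane point to handle carefully is the lower limit of the convolution sum and the initialization convention for $m_{t-r}$ and $c_{m,t-r}$ at negative times; getting the truncation at $r=t$ (rather than $r=\infty$) right is exactly what produces the $t$-dependent normalization $\sum_{r=0}^t\kappa_r$ instead of $1$, and is the reason the lemma is phrased with finite sums.
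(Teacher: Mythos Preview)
Your proposal is correct and follows essentially the same route as the paper: take the $z$-transform of the recurrence, perform a partial-fraction decomposition of the denominator $1+\sum_r a_r z^{-r}$ into simple poles $p_{a,r_1}$ with residues $z_{a,r_1}$, expand each as a geometric series, re-index to identify $\kappa_r$, and then observe that $c_{m,t}=\sum_{r=0}^t \kappa_r$ so that the quotient yields the normalized weights $\hat\kappa_r$. Your explicit flagging of the simple-pole assumption and the zero-initialization convention is more careful than the paper's own treatment, which leaves both points implicit.
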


We begin by analyzing how the low-pass filter aggregates historical momentum information while attenuating high-frequency DP noise. This is crucial because, as shown in Section~3.3, the true gradient signal concentrates in the low-frequency regime while DP noise is spectrally flat. Thus, the low-pass filter suppresses the high-frequency components, as indicated by the decay properties of \(\{\hat{\kappa}_r\}\). The proof of this lemma is available in the appendix.

\begin{restatable}[Bounded Momentum Variance]{lemma}{bmv}
\label{lem.bmv}
Under Assumptions~\ref{ass.lsmo}, \ref{ass.var}, and \ref{ass.ind}, if the step size satisfies
\begin{footnotesize}
\[
\eta \leq \sqrt{\frac{\sigma_{SGD}^2}{L^2k^3(C^2 + d \sigma_{DP}^2)}}, 
\]
then 
\[
\mathbb{E}\left[\|v_t^{(\xi)} - \nabla f(x_t)\|^2\right] \le \mathcal{O}\left(\frac{\sigma_{SGD}^2}{\rho^2}\right),
\]
where
\[
\rho = \sqrt{\frac{(1 + \beta)(1 - \beta^{k})}{(1-\beta)(1 + \beta^{k})}}.
\]
\end{footnotesize}
\end{restatable}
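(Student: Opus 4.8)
The plan is to expand the error $v_t^{(\xi)} - \nabla f(x_t)$ into three contributions and bound each separately: (i) the sampling noise averaged through the momentum weights, (ii) the drift of the true gradient across the $k$-iteration window, and (iii) the normalization effect of the weights $\hat\beta^{t-i}$. Concretely, I would write
\[
v_t^{(\xi)} - \nabla f(x_t) = \sum_{i=t-k+1}^{t} \hat\beta^{\,t-i}\bigl(\nabla f^{(\xi)}(x_i) - \nabla f(x_i)\bigr) + \sum_{i=t-k+1}^{t} \hat\beta^{\,t-i}\bigl(\nabla f(x_i) - \nabla f(x_t)\bigr),
\]
using that $\sum_i \hat\beta^{\,t-i} = 1$. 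Taking $\mathbb{E}\|\cdot\|^2$ and applying $\|a+b\|^2 \le 2\|a\|^2 + 2\|b\|^2$ splits the bound into a \emph{variance term} and a \emph{drift term}.

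For the variance term, I would invoke Assumption~\ref{ass.ind} (independent sampling noise across iterations) so that the cross terms $\mathbb{E}[(\zeta_i^{(\xi)})^\top \zeta_j^{(\xi)}]$ vanish for $i\neq j$, leaving $\sum_{i=t-k+1}^{t} (\hat\beta^{\,t-i})^2\, \mathbb{E}\|\zeta_i^{(\xi)}\|^2 \le \sigma_{SGD}^2 \sum_{r=0}^{k-1} (\hat\beta^{\,r})^2$ by Assumption~\ref{ass.var}. The sum of squared normalized weights is $\sum_{r=0}^{k-1}\beta^{2r} / \bigl(\sum_{r=0}^{k-1}\beta^{r}\bigr)^2 = \frac{(1-\beta^{2k})/(1-\beta^2)}{((1-\beta^k)/(1-\beta))^2} = \frac{(1-\beta)(1+\beta^k)}{(1+\beta)(1-\beta^k)} = 1/\rho^2$, which is exactly where the quantity $\rho$ comes from. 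So this piece contributes $\mathcal{O}(\sigma_{SGD}^2/\rho^2)$.

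For the drift term, I would use $L$-smoothness (Assumption~\ref{ass.lsmo}) to get $\|\nabla f(x_i) - \nabla f(x_t)\| \le L\|x_i - x_t\| \le L\sum_{j=i}^{t-1}\eta\|\hat m_j\|$, and then bound $\|\hat m_j\|$ crudely by the clipping threshold and the noise, i.e. on the order of $C + \sqrt{d}\,\sigma_{DP}$ (the aggregated clipped momentum has norm at most $C$ and the Gaussian noise contributes $\sqrt{d}\,\sigma_{DP}$ in expectation of the square, and the low-pass filter coefficients sum to one). Since each index is at most $k$ steps back, the drift is $\mathcal{O}\bigl(\eta^2 L^2 k^2 (C^2 + d\sigma_{DP}^2)\bigr)$ per term, and after the weighted sum over the window an extra factor of $k$ gives $\mathcal{O}\bigl(\eta^2 L^2 k^3 (C^2 + d\sigma_{DP}^2)\bigr)$. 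The stated step-size condition $\eta \le \sqrt{\sigma_{SGD}^2 / (L^2 k^3(C^2 + d\sigma_{DP}^2))}$ is precisely what forces this drift term to be $\mathcal{O}(\sigma_{SGD}^2)$, hence absorbed into the dominant $\mathcal{O}(\sigma_{SGD}^2/\rho^2)$ term (noting $\rho \ge 1$ when $\beta \in (0,1)$, so $\sigma_{SGD}^2 \le \sigma_{SGD}^2/\rho^2 \cdot \rho^2$; one should check the direction of this inequality carefully, or simply state the bound as the sum of both orders).

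The main obstacle I anticipate is making the bound on $\|\hat m_j\|$ rigorous: $\hat m_j$ is the \emph{filtered, bias-corrected} aggregated momentum, so controlling its norm requires the representation from Lemma~\ref{lem.trans} ($\hat m_j = \sum_r \hat\kappa_r \bar v_{j-r}$ with $\sum_r \hat\kappa_r = 1$) together with a bound on $\|\bar v_{j-r}\| \le C + \|w_{j-r}\|$, and then taking expectations of squared norms while handling the cross-correlations between the deterministic clipped part and the independent DP noise across the filter window. A secondary subtlety is the early-iteration regime $t < k$, where the window is truncated; I would either assume a war?up or note that the weights still sum to one by the definition of $c_\beta$, so the same argument goes through with $k$ replaced by $t+1 \le k$. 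Everything else is the routine Cauchy--Schwarz / Young's-inequality bookkeeping that I would relegate to the appendix.
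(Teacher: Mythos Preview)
Your proposal is correct and follows essentially the same decomposition and bounding strategy as the paper's proof: split into a sampling-noise term controlled by Assumptions~\ref{ass.var} and~\ref{ass.ind} (yielding the $1/\rho^2$ factor exactly as you compute), and a drift term controlled by $L$-smoothness, the update rule, the filter representation from Lemma~\ref{lem.trans}, and the crude bound $\mathbb{E}\|\bar v\|^2 \le C^2 + d\sigma_{DP}^2$. The one point you flag as uncertain---whether the drift term is $\mathcal{O}(\sigma_{SGD}^2)$ or $\mathcal{O}(\sigma_{SGD}^2/\rho^2)$---is resolved in the paper by applying Cauchy--Schwarz to the outer $\hat\beta$-weighted sum \emph{before} bounding the inner norms, which factors out $\sum_i (\hat\beta^{t-i})^2 = 1/\rho^2$ from the drift term as well, so both pieces are $\le \sigma_{SGD}^2/\rho^2$ and the final bound is $4\sigma_{SGD}^2/\rho^2$.
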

\begin{proofsketch}
We decompose the error into the variance of the sampling noise
%\[
$\mathbb{E}\left[\|\nabla f^{(\xi)}(x_i) - \nabla f(x_i)\|^2\right]$,
%\]
and the error due to the drift between \(\nabla f(x_i)\) and \(\nabla f(x_t)\). The former is directly controlled by Assumption~\ref{ass.var} and the independence in Assumption~\ref{ass.ind}, while the latter is bounded via the \(L\)-smoothness condition (Assumption~\ref{ass.lsmo}). The weighted averaging in the per-sample momentum reduces the overall variance by the factor \(\rho^2\). Detailed derivations are provided in the appendix.
\end{proofsketch}
\begin{remark}
The factor \(\rho^2\) increases with both the per-sample momentum factor \(\beta\) and the momentum length \(k\). When \(\beta\) reaches its maximum value of 1, the exponential decay reduces to an equal-weight average, and \(\rho^2\) approaches \(k\). Theoretically, larger values of \(\beta\) and \(k\) are preferable; however, in practice, a large \(\beta\) may cause training to rely overly on historical information, potentially slowing convergence.
%, and a large \(k\) increases memory and computational requirements as we show in our experiments.
\end{remark}

\noindent\textbf{Convergence Analysis}
\label{sec.ca}
We now combine the above lemmas to establish the convergence rate of 
Algorithm~\ref{alg}. The analysis builds on a standard descent lemma for \(L\)-smooth functions and is augmented by our representation of the low-pass filtered momentum and the variance reduction effect.

% \vspace{0.2cm}
\noindent\textit{Step 1: Descent Lemma.}  
By \(L\)-smoothness from Assumption~\ref{ass.lsmo}, we have
\begin{footnotesize}
\[
\mathbb{E}\left[f(x_{t+1}) - f(x_t)\right] \leq -\eta\,\mathbb{E}\left[\langle \nabla f(x_t), \hat{m}_t \rangle\right] + \frac{L\eta^2}{2}\, \mathbb{E}\left[\|\hat{m}_t\|^2\right].
\]
\end{footnotesize}

\noindent\textit{Step 2: Decomposition of Gradient.}  
Using the representation from Lemma~\ref{lem.trans}, we write
\[
\langle \nabla f(x_t), \hat{m}_t \rangle = \sum_{r=0}^t \hat{\kappa}_r\, \langle \nabla f(x_t), \bar{v}_{t-r} \rangle.
\]
We decompose each inner product into two parts:
\begin{enumerate}
    \item The correlation between the current gradient and the historical (unclipped) momentum, which under Assumption~\ref{ass.corr} can be bounded in terms of \(\|\nabla f(x_t)\|^2\) and \(\|\nabla f(x_{t-r})\|^2\).
    \item A bias term arising from clipping, i.e., the difference \(\mathbb{E}[\tilde{v}_{t-r}^{(\xi)} - v_{t-r}^{(\xi)}]\), which is bounded by \(\mathcal{O}\left(G + \frac{\sigma_{SGD}}{\rho}\right)\) using Assumption~\ref{ass.bod} and Lemma~\ref{lem.bmv}.
\end{enumerate}
\textit{Step 3: Final Convergence Bound.}  
After careful estimation of the descent term and the term \(\mathbb{E}\left[\|\hat{m}_t\|^2\right]\) (which also incorporates the effect of the DP noise with variance \(d \, \sigma_{DP}^2\)), telescoping over \(T\) iterations and taking averages yields the following convergence guarantee:
\begin{restatable}[Convergence Bound]{theorem}{con}
\label{thm.con}
Under Assumptions~\ref{ass.lsmo}--\ref{ass.ind}, if Algorithm~\ref{alg} is running for \(T\) iterations with step size
\[
\eta \leq \sqrt{\frac{\sigma_{SGD}^2}{L^2k^3(C^2 + d \, \sigma_{DP}^2)}},
\]
then $\mathbb{E}\left[\|\nabla f(x_t)\|^2\right]$ is upper bounded by
\begin{scriptsize}
\[
\mathcal{O}\left(\frac{f(x_0) - f^*}{\eta T}  + L\eta C^2 + \frac{L\eta}{\Gamma_{DP}}\, d\, \sigma^2_{DP} + \left(\frac{G^2}{\Gamma_{SGD}} +\frac{\sigma_{SGD}^2}{\rho^2 \, \Gamma_{SGD}}\right)\right),
\]
\end{scriptsize}
where
$\hat{c}_r = \sum_{i=t-r-k+1}^{t-r} \hat{\beta}^{t-r-i}c_{t-i}, \, \rho = \sqrt{\frac{(1+\beta)(1-\beta^k)}{(1-\beta)(1+\beta^k)}}$, and $f^* = min_x f(x).$ $\Gamma_{DP} = \frac{\sum_{t=0}^{T-1}\sum_{r=0}^t \hat{\kappa}_r \hat{c}_r}{\sum_{t=0}^{T-1}\sum_{r=0}^t \hat{\kappa}_r^2}$ and $\Gamma_{SGD} = \frac{\sum_{t=0}^{T-1}\sum_{r=0}^t \hat{\kappa}_r \hat{c}_r}{\sum_{t=0}^{T-1}\sum_{r=0}^t  \frac{\hat{\kappa}_r}{\hat{c}_{r}}}$ are two ratios introduce by low-pass filtering.
\end{restatable}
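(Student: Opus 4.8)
\textbf{Proof proposal for Theorem~\ref{thm.con}.}

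The plan is to follow the three-step skeleton already sketched in the excerpt and fill in the missing estimates. First I would start from the descent lemma (Step~1) and substitute the low-pass filter representation $\hat m_t = \sum_{r=0}^t \hat\kappa_r \bar v_{t-r}$ from Lemma~\ref{lem.trans}. The inner-product term $\mathbb{E}[\langle \nabla f(x_t), \hat m_t\rangle]$ must be lower bounded. I would first split $\bar v_{t-r} = \frac1B\sum_{\xi}\tilde v_{t-r}^{(\xi)} + w_{t-r}$; since $w_{t-r}$ is zero-mean and independent of $\nabla f(x_t)$, it drops out in expectation. Then write $\tilde v_{t-r}^{(\xi)} = v_{t-r}^{(\xi)} + (\tilde v_{t-r}^{(\xi)} - v_{t-r}^{(\xi)})$, so that $\langle\nabla f(x_t),\bar v_{t-r}\rangle$ decomposes into a ``signal'' part $\langle\nabla f(x_t), \mathbb{E}_\xi v_{t-r}^{(\xi)}\rangle$ and a ``clipping bias'' part $\langle\nabla f(x_t), \mathbb{E}_\xi[\tilde v_{t-r}^{(\xi)} - v_{t-r}^{(\xi)}]\rangle$. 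For the signal part I would use $\mathbb{E}_\xi v_{t-r}^{(\xi)} = \sum_{i=t-r-k+1}^{t-r}\hat\beta^{\,t-r-i}\nabla f(x_i)$ and invoke Assumption~\ref{ass.corr} termwise to obtain the lower bound $\hat c_r\|\nabla f(x_t)\|^2 + (\text{nonnegative terms in }\|\nabla f(x_{t-i})\|^2)$, which is where the weighted auto-correlation constants $\hat c_r = \sum_{i}\hat\beta^{\,t-r-i}c_{t-i}$ enter. For the clipping-bias part I would bound $\|\mathbb{E}_\xi[\tilde v_{t-r}^{(\xi)} - v_{t-r}^{(\xi)}]\|$ by a standard clipping argument: the clipping error is nonzero only when $\|v_{t-r}^{(\xi)}\|>C$, and its magnitude is controlled using Assumption~\ref{ass.bod} ($\|\nabla f^{(\xi)}\|\le G$ so $\|v^{(\xi)}\|\le G$ via the normalized weights) together with the variance bound $\mathbb{E}\|v_{t-r}^{(\xi)} - \nabla f(x_{t-r})\|^2 \le \mathcal{O}(\sigma_{SGD}^2/\rho^2)$ from Lemma~\ref{lem.bmv}, giving the $\mathcal{O}(G + \sigma_{SGD}/\rho)$ bound quoted in Step~2; after Cauchy–Schwarz and Young's inequality this contributes the $\frac{G^2}{\Gamma_{SGD}} + \frac{\sigma_{SGD}^2}{\rho^2\Gamma_{SGD}}$ term once the $\hat\kappa_r/\hat c_r$ weighting is accounted for.

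Next I would bound the second-order term $\mathbb{E}\|\hat m_t\|^2$. Expand $\hat m_t = \sum_r \hat\kappa_r(\frac1B\sum_\xi\tilde v_{t-r}^{(\xi)}) + \sum_r\hat\kappa_r w_{t-r}$; the noise part contributes $\big(\sum_r\hat\kappa_r^2\big) d\sigma_{DP}^2$ by independence of the $w_{t-r}$'s (this is exactly where the sum-of-squares in the denominator of $\Gamma_{DP}$ comes from — the low-pass filter's $\sum_r\hat\kappa_r^2 \ll (\sum_r\hat\kappa_r)^2 = 1$ is the noise-attenuation gain), while the clipped-momentum part is bounded by $C^2$ since each $\|\tilde v_{t-r}^{(\xi)}\|\le C$ and $\sum_r\hat\kappa_r = 1$ (Jensen). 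Multiplying by $\frac{L\eta^2}{2}$ yields the $L\eta C^2$ and $\frac{L\eta}{\Gamma_{DP}}d\sigma_{DP}^2$ terms after dividing through by $\eta$ in the final rearrangement. The step-size restriction $\eta\le\sqrt{\sigma_{SGD}^2/(L^2k^3(C^2+d\sigma_{DP}^2))}$ is precisely what is needed to absorb the higher-order drift terms generated when relating $\nabla f(x_{t-i})$ back to $\nabla f(x_t)$ across the momentum window (each such step costs $L^2\eta^2$ times a squared-update term bounded by $k(C^2+d\sigma_{DP}^2)$, and there are $\mathcal{O}(k)$ lags, hence the $k^3$), so that those terms can be moved to the left-hand side without changing the stated rate.

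Finally (Step~3) I would telescope the descent inequality over $t=0,\dots,T-1$: the left side collapses to $\mathbb{E}[f(x_T)] - f(x_0) \ge f^* - f(x_0)$, and rearranging gives $\frac{1}{T}\sum_t \frac{\hat c_t\text{-weighted}}{\cdots}\mathbb{E}\|\nabla f(x_t)\|^2$ on one side. Dividing by the aggregate weight $\sum_{t}\sum_r\hat\kappa_r\hat c_r$ and identifying the ratios $\Gamma_{DP} = \frac{\sum_t\sum_r\hat\kappa_r\hat c_r}{\sum_t\sum_r\hat\kappa_r^2}$ and $\Gamma_{SGD} = \frac{\sum_t\sum_r\hat\kappa_r\hat c_r}{\sum_t\sum_r\hat\kappa_r/\hat c_r}$ produces the claimed bound; the $\frac{f(x_0)-f^*}{\eta T}$ term is the telescoped initial gap divided by the normalization. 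The main obstacle, I expect, is the careful bookkeeping in Step~2: correctly propagating the per-sample momentum's internal weighted sum through both the auto-correlation inequality and the clipping-bias estimate so that the aggregate weights recombine exactly into $\hat c_r$ and the two $\Gamma$ ratios, while ensuring all the cross-lag drift terms are genuinely dominated under the stated step size rather than merely ``small''. A secondary subtlety is handling the non-negative $\|\nabla f(x_{t-r})\|^2$ terms produced by Assumption~\ref{ass.corr} with the $c_{-r}$ coefficients — these should either be discarded (since $c_{-r}\ge 0$ only gives a one-sided bound, one must check the direction of the inequality is favorable) or folded into the telescoping sum, and getting that accounting right is where a naive proof would slip.
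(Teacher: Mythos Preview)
Your proposal is correct and follows essentially the same route as the paper's proof: descent lemma, substitution of the filter representation from Lemma~\ref{lem.trans}, splitting $\bar v_{t-r}$ into signal plus clipping-bias plus zero-mean noise, invoking Assumption~\ref{ass.corr} termwise on the per-sample momentum expansion to produce the $\hat c_r$ weights, bounding the clipping bias via Lemma~\ref{lem.bmv} and Assumption~\ref{ass.bod}, bounding $\mathbb{E}\|\hat m_t\|^2$ by $C^2 + (\sum_r\hat\kappa_r^2)d\sigma_{DP}^2$, and telescoping. The only cosmetic differences are that the paper first keeps the clipping-bias bound as $\|\nabla f(x_{t-r})\| + 2\sigma_{SGD}/\rho$ and applies Assumption~\ref{ass.bod} only after Young's inequality (rather than bounding $\|v^{(\xi)}\|\le G$ up front), and that the step-size restriction is consumed entirely inside Lemma~\ref{lem.bmv} rather than reappearing in the main argument; your anticipated bookkeeping concern about the $c_{-r}$ terms is handled in the paper simply by dropping the corresponding (assumed favorable-sign) term.
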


\begin{remark}
The first term represents the optimization error decreasing with the number of iterations.
 The second term \(L\eta C^2\) reflects the error introduced by gradient clipping. The third term captures the impact of the DP noise, where the aggregated effect is modulated by the filter coefficients. The final term aggregates the residual bias due to low-pass filtering and the reduced clipping bias from per-sample momentum. 
 %Compared to LP-DPSGD~\citep{zhang2024}, 
 Our approach reduces the clipping bias term by introducing the variance reduction factor \(\rho\).
\end{remark}
\begin{corollary}\label{cor}
If the per-sample gradient norm is bounded by \(G = \mathcal{O}\!\left(\frac{\sigma_{SGD}}{\rho}\right)\), then the bound in Theorem~\ref{thm.con} simplifies to
\[
\mathcal{O}\!\left(\frac{f(x_0)-f^*}{\eta T} + L\eta C^2 + \frac{L\eta\, d\, \sigma_{DP}^2}{\Gamma_{DP}} + \frac{\sigma_{SGD}^2}{\rho^2 \, \Gamma_{SGD}}\right).
\]
\end{corollary}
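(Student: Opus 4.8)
The plan is to obtain Corollary~\ref{cor} as a direct order-of-magnitude consequence of Theorem~\ref{thm.con}; no new estimation is needed. Theorem~\ref{thm.con} already bounds $\mathbb{E}[\|\nabla f(x_t)\|^2]$ by
\[
\mathcal{O}\!\left(\frac{f(x_0)-f^*}{\eta T} + L\eta C^2 + \frac{L\eta}{\Gamma_{DP}}\, d\,\sigma_{DP}^2 + \frac{G^2}{\Gamma_{SGD}} + \frac{\sigma_{SGD}^2}{\rho^2\,\Gamma_{SGD}}\right),
\]
and the hypothesis $G=\mathcal{O}\!\left(\sigma_{SGD}/\rho\right)$ affects only the last two terms. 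The remaining three terms (optimization error, clipping error $L\eta C^2$, and the DP-noise term modulated by $\Gamma_{DP}$) are carried over verbatim.

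First I would square the hypothesis to get $G^2 = \mathcal{O}\!\left(\sigma_{SGD}^2/\rho^2\right)$. Dividing through by $\Gamma_{SGD}$, which by its definition $\Gamma_{SGD} = \big(\sum_t\sum_r \hat{\kappa}_r \hat{c}_r\big)\big/\big(\sum_t\sum_r \hat{\kappa}_r/\hat{c}_r\big)$ is a fixed positive quantity determined solely by the filter coefficients $\{a_r\},\{b_r\}$ and the momentum parameters $\beta,k$ — in particular independent of $G$ — yields $\frac{G^2}{\Gamma_{SGD}} = \mathcal{O}\!\left(\frac{\sigma_{SGD}^2}{\rho^2\,\Gamma_{SGD}}\right)$. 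Consequently the pair $\frac{G^2}{\Gamma_{SGD}} + \frac{\sigma_{SGD}^2}{\rho^2\,\Gamma_{SGD}}$ is itself $\mathcal{O}\!\left(\frac{\sigma_{SGD}^2}{\rho^2\,\Gamma_{SGD}}\right)$, and reabsorbing this single term into the big-$\mathcal{O}$ of Theorem~\ref{thm.con} gives exactly the four-term expression claimed in the corollary.

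The only point meriting a word of care is that the constant hidden inside $\Gamma_{SGD}$ (and inside $\rho$) does not depend on $G$, so that dividing the asymptotic relation $G^2=\mathcal{O}(\sigma_{SGD}^2/\rho^2)$ by $\Gamma_{SGD}$ legitimately preserves it; this is immediate from the definitions above. One should also note the conditional nature of the statement: Assumption~\ref{ass.bod} only guarantees the existence of some bound $G$, and the corollary simply isolates the favourable regime where that bound is of order $\sigma_{SGD}/\rho$ or smaller — precisely the regime in which the clipping-induced $G^2$ contribution is dominated by the variance-reduced sampling term and the bound collapses. I do not anticipate any genuine obstacle: this is essentially a bookkeeping corollary of Theorem~\ref{thm.con}, and the only substantive work (controlling $G^2/\Gamma_{SGD}$ and the variance term via Lemma~\ref{lem.bmv}) has already been done in proving the theorem.
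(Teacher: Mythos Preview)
Your proposal is correct and matches the paper's approach: the corollary is stated without proof in the paper, as it follows immediately from Theorem~\ref{thm.con} by substituting $G^2=\mathcal{O}(\sigma_{SGD}^2/\rho^2)$ so that $G^2/\Gamma_{SGD}$ is absorbed into the $\sigma_{SGD}^2/(\rho^2\Gamma_{SGD})$ term. Your observation that $\Gamma_{SGD}$ and $\rho$ are independent of $G$ is exactly the (only) point needed to justify the substitution.
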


\begin{remark}
    A careful inspection of our convergence bound reveals that, by choosing \(\beta\), \(k\), and low-pass filter coefficients appropriately, our approach reduces the clipping bias term by introducing the variance reduction factor \(\rho\) and mitigates the effect of DP noise via the low-pass filter compared to vanilla DPSGD~\citep{abadi2016}. 
    %This improvement is achieved without consuming additional privacy budget, since the low-pass filter is a post-processing step.
\end{remark}

\begin{table*}[t!]
\caption{Test accuracy (\%) comparison across datasets on ViT with fixed epoch (Epoch = 25 for MNIST and Fashion-MNIST, Epoch = 50 for CIFAR-10 and CIFAR-100) and different privacy budgets $\epsilon$ = 1 and 8.}
\centering
\resizebox{\textwidth}{!}{%
\begin{tabular}{l|c|c|c|c|c|c|c|c}
\hline
Method & \multicolumn{2}{c|}{MNIST} & \multicolumn{2}{c|}{Fashion-MNIST} & \multicolumn{2}{c|}{CIFAR-10} & \multicolumn{2}{c}{CIFAR-100} \\
\cline{2-9}
& $\epsilon{=}1$ & $\epsilon{=}8$ & $\epsilon{=}1$ & $\epsilon{=}8$ & $\epsilon{=}1$ & $\epsilon{=}8$ & $\epsilon{=}1$ & $\epsilon{=}8$ \\
\hline
DPSGD & 89.00 ± 0.06 & 88.95 ± 0.01 & 78.96 ± 0.05 & 79.04 ± 0.04 & 35.74 ± 0.26 & 47.74 ± 1.20 & 7.52 ± 0.49 & 18.27 ± 0.48 \\
LP-DPSGD & 88.99 ± 0.06 & 88.96 ± 0.01 & 79.03 ± 0.10 & 79.02 ± 0.10 & 35.84 ± 0.63 & 48.37 ± 0.36 & 7.55 ± 0.26 & 18.52 ± 0.27 \\
InnerOuter & 92.15 ± 0.15 & \textbf{92.43 ± 0.06} & 80.50 ±2.28 & 81.18 ± 1.56 & 11.55 ± 1.07 & 33.53 ± 0.52 & 1.13 ± 0.20 & 13.93 ± 0.40 \\
DP-PMLF & \textbf{92.16 ± 0.05} & 92.39 ± 0.07 & \textbf{80.65 ± 1.17} & \textbf{81.93 ± 0.83} & \textbf{40.96 ± 1.18} & \textbf{51.47 ± 0.33} & \textbf{ 11.40 ± 0.21} & \textbf{23.15 ± 0.52} \\
\hline
\end{tabular}
}
\label{tab.vit}
\end{table*}
\begin{figure*}[t!]
    \centering
    \includegraphics[width=0.8\textwidth]{./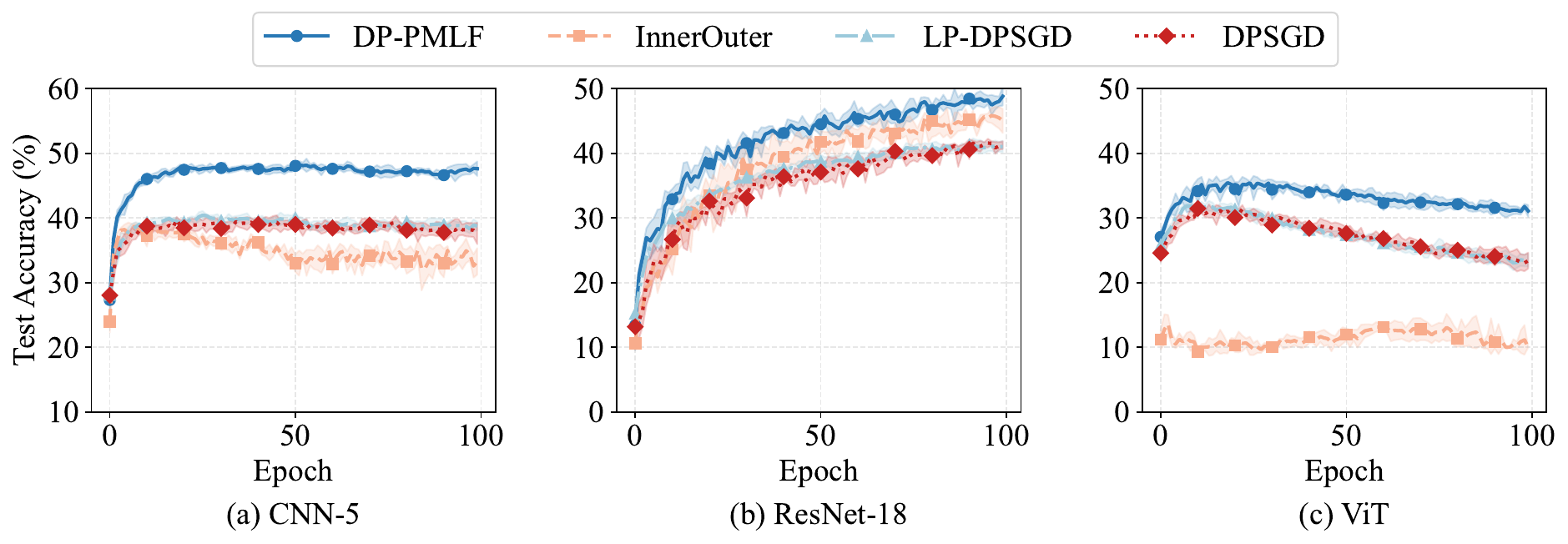}
    \caption{Test accuracy (\%) comparison across different models on CIFAR-10 with fixed privacy budget $\epsilon =1.$}
    \label{fig.model}
\end{figure*}

\noindent\textbf{Privacy Analysis}
We establish that our approach satisfies \((\epsilon,\delta)\)-differential privacy.
\begin{theorem}[Differential Privacy Guarantee]
There exist absolute constants \(c_1\) and \(c_2\) such that, given sampling probability \(q = B/n\) and \(T\) iterations, for any \(\epsilon < c_1 q^2T\), Algorithm~\ref{alg} is \((\epsilon,\delta)\)-differentially private for any \(\delta > 0\) if the noise scale ($\sigma_{DP}$) satisfies
\[
\sigma_{DP} \geq c_2\frac{q\sqrt{T\log(1/\delta)}}{\epsilon}.
\]
\end{theorem}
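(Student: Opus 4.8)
The plan is to invoke the moments accountant framework of Abadi~\textit{et al.}~\citep{abadi2016} essentially verbatim, once we identify the single data-dependent primitive in Algorithm~\ref{alg} as a subsampled Gaussian mechanism and verify that all subsequent operations are post-processing. First I would isolate the map that actually touches the dataset $D$ in iteration $t$: this is the computation of $\bar v_t = \tfrac1B\sum_{\xi\in\mathcal B_t}\tilde v_t^{(\xi)} + w_t$ in line~8. Although the per-sample momentum $v_t^{(\xi)}$ of line~5 aggregates gradients across the previous $k$ iterations, each term $\nabla f^{(\xi)}(x_i)$ depends only on the sample $\xi$ itself (the iterates $x_i$ being, inductively, outputs of prior DP releases and hence public), so $v_t^{(\xi)}$ is a deterministic function of $\xi$ alone; and because the momentum coefficients $\hat\beta^{\,t-i}$ sum to one, clipping $v_t^{(\xi)}$ to norm $C$ makes the $\ell_2$-sensitivity of $\sum_{\xi\in\mathcal B_t}\tilde v_t^{(\xi)}$ with respect to adding or removing one record exactly $C$. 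Dividing by $B$ and adding $\mathcal N(0,\sigma_{DP}^2 I_d)$ is therefore the Gaussian mechanism on a query of sensitivity $C/B$ with noise multiplier $z := \sigma_{DP}B/C$; combined with Poisson/uniform subsampling at rate $q = B/n$, line~8 is a subsampled Gaussian mechanism in exactly the sense of~\citep{abadi2016}.

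Next I would handle composition over the $T$ iterations. The adaptive sequence $(\bar v_0,\dots,\bar v_{T-1})$ is a $T$-fold composition of such subsampled Gaussian mechanisms; by Theorem~1 of~\citep{abadi2016} (the moments accountant bound), there are absolute constants $c_1,c_2$ such that for any $\epsilon < c_1 q^2 T$ the composed mechanism is $(\epsilon,\delta)$-DP provided the noise multiplier obeys $z \ge c_2\, q\sqrt{T\log(1/\delta)}/\epsilon$. Substituting $z = \sigma_{DP}B/C$ and $q = B/n$ and solving for $\sigma_{DP}$ gives precisely the stated condition $\sigma_{DP} \ge c_2\, q\sqrt{T\log(1/\delta)}/\epsilon$ (the $C$, $B$ factors rescaling $c_2$, or equivalently absorbing the sensitivity into the definition of the noise scale as the paper does). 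Finally I would argue that everything downstream of the releases $\{\bar v_t\}$ is privacy-preserving for free: the low-pass recursion producing $m_t$, the deterministic bias-correction constants $c_{m,t}$, the normalization $\hat m_t = m_t/c_{m,t}$, and the parameter update $x_{t+1} = x_t - \eta\hat m_t$ are all fixed (data-independent) functions of the already-released $\bar v_{t-r}$'s and of $x_0$, so by the post-processing lemma (Lemma~\ref{pr.pp}) the full transcript, and in particular the returned $x_T$, inherits $(\epsilon,\delta)$-DP.

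The main obstacle, and the one step deserving genuine care rather than citation, is the sensitivity argument for $v_t^{(\xi)}$ under the add/remove neighboring relation together with a fixed-size minibatch: one must confirm that the only data-dependent object is the \emph{current-batch} sum and that the cross-iteration momentum does not create a subtle dependence path (e.g.\ a sample appearing in $\mathcal B_t$ through its earlier gradients $\nabla f^{(\xi)}(x_i)$ for $i<t$) that would amplify sensitivity or break the per-iteration subsampling independence assumed by the accountant. The clean resolution is to note that $v_t^{(\xi)}$ is evaluated and clipped per-sample \emph{before} aggregation, so a change in one record alters at most one summand $\tilde v_t^{(\xi)}$ in each round it is sampled into, and the standard privacy-amplification-by-subsampling composition of~\citep{abadi2016} applies without modification; the normalization constant $c_\beta$ ensuring $\sum_i\hat\beta^{\,t-i}=1$ is what keeps the post-clip sensitivity at $C$ rather than something that grows with $k$. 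I would also briefly remark that if uniform (without-replacement) rather than Poisson subsampling is used, the same constants hold up to the usual accounting adjustments, so the theorem statement with unspecified absolute constants $c_1,c_2$ is safe.
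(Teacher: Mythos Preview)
Your proposal is correct and follows essentially the same route as the paper's proof: bound the per-iteration sensitivity at $C$ via clipping, treat line~8 as a subsampled Gaussian mechanism, invoke post-processing (Lemma~\ref{pr.pp}) for the low-pass filter and update steps, and apply the moments accountant of~\citep{abadi2016} across $T$ iterations. One small inaccuracy: it is the clipping operation itself, not the normalization $\sum_i\hat\beta^{\,t-i}=1$, that caps the post-clip sensitivity at $C$; the normalization matters for the utility analysis but is irrelevant to the privacy argument.
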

\begin{proof}
Let \(D\) and \(D'\) be any two neighbouring datasets where \(D'\) contains exactly one additional sample \(\xi\) compared to \(D\). The global sensitivity of the clipped per-sample momentum satisfies
%\[
$\|\text{clip}(v_t^{(\xi)}, C)\| \leq C$.
%\]
Thus, by utilizing the Gaussian mechanism in Definition~\ref{def.gaussian} and privacy amplification by subsampling~\citep{balle2018}, $\bar{v}_t$ in each training iteration is protected by \((\mathcal{O}(q/\sigma_{DP}), \delta/T)\)-DP. Given that the subsequent low-pass filter is applied as a post-processing step (Lemma~\ref{pr.pp}) in each training iteration, $\hat{m}_t$ is also protected by \((\mathcal{O}(q/\sigma_{DP}), \delta/T)\)-DP. The moments accountant method~\citep{abadi2016} then implies that, over \(T\) iterations, the overall privacy guarantee is \((\epsilon,\delta)\)-DP provided if the stated condition on \(\sigma_{DP}\) holds.
\end{proof}

\section{Experiments}\label{sec.er}
%\x{Table 2 \& Figure 4 \& Ablation study. Refer to https://arxiv.org/pdf/2408.13460}
%In this section, we evaluate our proposed DP-PMLF approach through comprehensive experiments. Section~\ref{sec.es} outlines the experimental setup. Section~\ref{sec.tp} shows the trade-off between privacy and utility under different settings. Section~\ref{sec.as} conducts an ablation study to validate the effectiveness of the per-sample momentum and the low-pass filter. Section~\ref{sec.ps} analyzes the effect of choosing hyper-parameters.
Next we evaluate DP-PMLF through comprehensive experiments. 
Due to page limitation, details of the experimental setting and additional results are given in the appendix. 

% For repeatability, we have made the code of our approach DP-PMLF available, which can be accessed through \url{https://anonymous.4open.science/r/DP-PMLF-521F}.  

\subsection{Experiment Setting}\label{sec.es}

\noindent\textbf{Dataset.} We evaluate our approach on four image classification datasets, including MNIST~\citep{deng2012}, Fashion-MNIST~\citep{xiao2017}, CIFAR-10~\citep{krizhevsky2009}, and CIFAR-100~\citep{krizhevsky2009}, and four sentence classification datasets, including MNLI, QNLI, QQP, and SST-2 from the GLUE benchmark~\citep{wang2018}.

\noindent\textbf{Baselines.} We compare the test accuracy of DP-PMLF with vanilla DPSGD~\citep{abadi2016} and two state-of-the-art methods introduced in Section~\ref{sec.tpm}: LP-DPSGD~\citep{zhang2024} and InnerOuter~\citep{xiao2023}. 

\noindent\textbf{Models.} We utilized three models for image classification tasks: a 5-layer CNN~\citep{zhang2024}, ResNet-18~\citep{he2016}, and the Vision Transformer (ViT)~\citep{dosovitskiy2021}. These models are initialized with random weights without pretraining. For sentence classification tasks,  we
fine-tune a pre-trained RoBERTa-base model~\citep{liu2019}.

\noindent\textbf{Hyper-parameters.} The parameter choices are detailed in the appendix and based on settings commonly used in the literature. All experiments are repeated five times, with the mean and standard deviation reported. 
%The parameter choices used in the experiments are listed in Appendix~\ref{sec:expsetup}.

\subsection{Privacy-Utility Trade-off}
\label{sec.tp}
\subsubsection{Image Classification}
We compare the performance of our method against baselines across different 
models and datasets with varying privacy budgets $\epsilon$.
We report the test accuracy for $\epsilon = 1$ and $\epsilon = 8$ for the ViT model 
in Table~\ref{tab.vit}. As can be seen, DP-PMLF maintains its leading performance. 
For instance, on Fashion-MNIST, it achieves accuracies of about 80.65\% at 
\(\epsilon = 1\) and 81.93\% at \(\epsilon = 8\). On CIFAR-100, our approach 
maintains a 4--5\% margin over the next-best baselines across both privacy budgets.

\begin{table*}[t]
\centering
\caption{Test accuracy (\%) comparison across GLUE benchmark subsets with different privacy budgets $\epsilon = 1, 8$.}
\resizebox{\textwidth}{!}{%
\begin{tabular}{l|c|c|c|c|c|c|c|c}
\hline
Method & \multicolumn{2}{c|}{MNLI} & \multicolumn{2}{c|}{QNLI} & \multicolumn{2}{c|}{QQP} & \multicolumn{2}{c}{SST-2} \\
\cline{2-9}
& $\epsilon{=}1$ & $\epsilon{=}8$ & $\epsilon{=}1$ & $\epsilon{=}8$ & $\epsilon{=}1$ & $\epsilon{=}8$ & $\epsilon{=}1$ & $\epsilon{=}8$ \\
\hline
DPSGD & 51.36 ± 0.66 & 72.00 ± 0.23 & 65.59 ± 0.66 & 85.47 ± 0.78 & 71.20 ± 0.97 & 80.38 ± 0.37 & 76.19 ± 1.15 & \textbf{90.83 ± 0.38} \\
LP-DPSGD & 52.75 ± 0.62 & 71.48 ± 0.23 & 66.34 ± 0.94 & 85.44 ± 0.45 & 71.61 ± 0.71 & 80.55 ± 0.40 & 76.46 ± 0.24 & 89.24 ± 0.66 \\
InnerOuter & 48.45 ± 0.89 & 70.35 ± 0.38 & 69.46 ± 0.87 & 86.07 ± 0.58 & 70.27 ± 0.64 & 83.18 ± 0.34 & 76.49 ± 0.63 & 89.08 ± 0.43 \\
DP-PMLF & \textbf{56.81 ± 0.74} & \textbf{75.56 ± 0.42} & \textbf{72.38 ± 0.62} & \textbf{86.96 ± 0.69} & \textbf{75.55 ± 1.16} & \textbf{83.42 ± 0.52} & \textbf{78.07 ± 0.96} & 90.39 ± 1.03 \\
\hline
\end{tabular}
}
\label{tab.glue}
\end{table*}

\begin{figure*}[t]
    \centering
    \includegraphics[width=1.0\textwidth]{./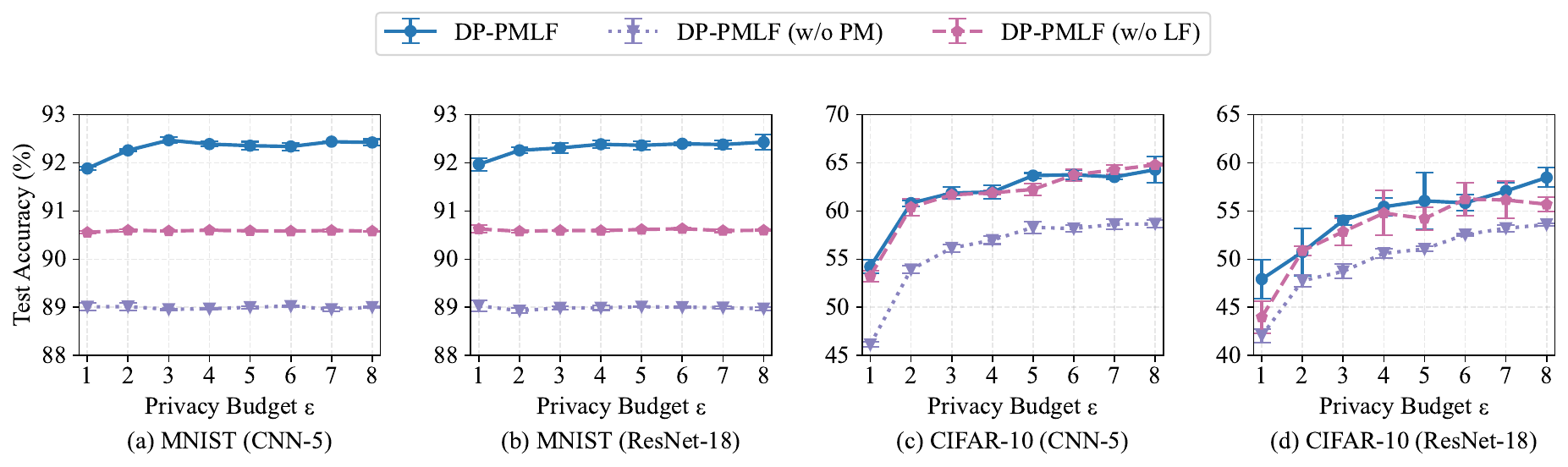}
    \caption{Test accuracy (\%) for DP-PMLF and its two variants, which are DP-PMLF without Per-sample Momentum (DP-PMLF (w/o PM)) and DP-PMLF without Low-pass Filter (DP-PMLF (w/o LF)).}
    \label{fig.ablation}
\end{figure*}

Under a high DP noise regime ($\epsilon = 1$), as shown in Table~\ref{tab.vit}, the
InnerOuter method shows a degradation in performance. This is because the InnerOuter 
method lacks normalization and suffers from excessive noise accumulation. In contrast,
DP-PMLF utilizes both normalization and a low-pass filter allowing our approach to 
better control and filter DP noise, thereby achieving superior results.

Figure~\ref{fig.model} presents the test accuracy on CIFAR-10 under \(\epsilon = 1\) 
for three model architectures: CNN-5, Resnet-18, and ViT. In all cases, DP-PMLF 
consistently surpasses the baseline methods. For example, with CNN-5, DP-PMLF 
attains approximately 47\% accuracy, exceeding the best baseline by around 9\%. 
Similarly, with ResNet-18, DP-PMLF reaches nearly 50\%, which is 1--2\% higher 
than the strongest competitor. Finally, when using ViT, DP-PMLF achieves about 
31\% accuracy, compared to only 23\% for the best baseline.

\subsubsection{Sentence Classification}
To further assess the performance of our approach, we extend our evaluation to sentence classification tasks using four datasets from the GLUE benchmark, with results presented in Table~\ref{tab.glue}. These experiments further demonstrate the effectiveness of DP-PMLF, which consistently shows a significant performance improvement over other baselines. When $\epsilon = 1$, our method surpasses the  baselines by over 4\% on MNLI and nearly 3\% on QNLI. Although the performance gap decreases under a more relaxed privacy budget of $\epsilon = 8$, DP-PMLF still outperforms or remains highly competitive with the baseline methods. These results show the effectiveness of our approach for sentence classification tasks in a differential privacy setting.

\subsection{Ablation Studies}
\label{sec.as}
We evaluated on two core components of DP-PMLF: per-sample momentum and the low-pass filter. Specifically, we denote DP-PMLF without per-sample momentum as \textit{DP-PMLF (w/o PM)} and DP-PMLF without the low-pass filter as \textit{DP-PMLF (w/o LF)}. We used MNIST and CIFAR-10 on the CNN-5 and Resnet-18 models with different privacy budget ($\epsilon$) values ranging from 1 to 8.
Figure~\ref{fig.ablation} shows that DP-PMLF consistently outperforms DP-PMLF (w/o PM) for different $\epsilon$ values. This is because per-sample momentum effectively reduces clipping bias and thus narrows the neighborhood around the optimal convergence point. 

Further, compared to DP-PMLF (w/o LF), our approach exhibits superior performance on MNIST by leveraging historical gradients. This refines the gradient descent direction and potentially accelerates convergence. For CIFAR-10, we adopt more complicated filter coefficients to enhance gradient signal smoothing and mitigate DP noise. This is advantageous when DP noise is large, e.g., when $\epsilon \leq 6.$ However, when DP noise is relatively small ($\epsilon > 6$), excessive smoothing may lead to the loss of true gradient information, causing DP-PMLF to perform slightly worse than DP-PMLF (w/o LF). This is evident in Figure~\ref{fig.ablation}(c) where our approach is achieving approximately 0.5--0.7\% less test accuracy than DP-PMLF (w/o LF) when $\epsilon > 6.$

\section{Conclusion and Future Work}\label{sec.c}
In this work, we propose a novel DPSGD variant that incorporates per-sample momentum 
and a low-pass filter to simultaneously reduce the effect of DP noise and clipping bias. 
We provide a theoretical proof of an improved convergence rate associated with a formal
DP guarantee. Our experimental results show that our approach achieves higher utility
in image and sentence classifications compared to the state-of-the-art DPSGD variants.
In future work, we will investigate how to analyze our approach under some 
general assumptions such as non-convex Polyak-Łojasiewicz conditions~\citep{karimi2016}, 
and $(L_0,L_1)$-smoothness~\citep{zhang2020b}. Furthermore, we will develop an adaptive
method to optimize the selection of the hyper-parameters in per-sample momentum and
low-pass filtering. Finally, we will investigate how to apply this method to different
domain applications, such as natural language processing tasks and reinforcement
learning.

\bibliography{aaai2026}

@inproceedings{koloskova2023a,
  title={Revisiting Gradient Clipping: Stochastic bias and tight convergence guarantees},
  author={Koloskova, Anastasia and Hendrikx, Hadrien and Stich, Sebastian U},
  booktitle={International Conference on Machine Learning},
  pages={17343--17363},
  year={2023},
  organization={PMLR},
  publisher={PMLR},
  address={Hawai}
}

@inproceedings{zhang2024,
  author       = {Xinwei Zhang and
                  Zhiqi Bu and
                  Mingyi Hong and
                  Meisam Razaviyayn},
  editor       = {Amir Globersons and
                  Lester Mackey and
                  Danielle Belgrave and
                  Angela Fan and
                  Ulrich Paquet and
                  Jakub M. Tomczak and
                  Cheng Zhang},
  title        = {{DOPPLER:} Differentially Private Optimizers with Low-pass Filter
                  for Privacy Noise Reduction},
  booktitle    = {Annual Conference on Neural Information Processing Systems (NeurIPS)},
  year         = {2024},
  url          = {http://papers.nips.cc/paper\_files/paper/2024/hash/49c466ccc038f39b08b1980a2b06673c-Abstract-Conference.html},
  timestamp    = {Fri, 07 Feb 2025 12:10:12 +0100},
  biburl       = {https://dblp.org/rec/conf/nips/0001B0R24.bib},
  bibsource    = {dblp computer science bibliography, https://dblp.org},
  address={Vancouver, BC, Canada},
  pages = {1--26},
  publisher={Curran Associates}
}

@inproceedings{xiao2023,
  title={A theory to instruct differentially-private learning via clipping bias reduction},
  author={Xiao, Hanshen and Xiang, Zihang and Wang, Di and Devadas, Srinivas},
  booktitle={IEEE Symposium on Security and Privacy (SP)},
  pages={2170--2189},
  year={2023},
  organization={IEEE},
  publisher={IEEE},
  address={San Francisco, CA, USA} 
}

@article{chen2022,
  title={Recent advances and clinical applications of deep learning in medical image analysis},
  author={Chen, Xuxin and Wang, Ximin and Zhang, Ke and Fung, Kar-Ming and Thai, Theresa C and Moore, Kathleen and Mannel, Robert S and Liu, Hong and Zheng, Bin and Qiu, Yuchen},
  journal={Medical image analysis},
  volume={79},
  pages={102444},
  year={2022},
  publisher={Elsevier}
}

@article{aggarwal2021,
  title={Diagnostic accuracy of deep learning in medical imaging: a systematic review and meta-analysis},
  author={Aggarwal, Ravi and Sounderajah, Viknesh and Martin, Guy and Ting, Daniel SW and Karthikesalingam, Alan and King, Dominic and Ashrafian, Hutan and Darzi, Ara},
  journal={NPJ digital medicine},
  volume={4},
  number={1},
  pages={65},
  year={2021},
  publisher={Nature Publishing Group UK London}
}

@article{chen2023a,
  title={Deep reinforcement learning in recommender systems: A survey and new perspectives},
  author={Chen, Xiaocong and Yao, Lina and McAuley, Julian and Zhou, Guanglin and Wang, Xianzhi},
  journal={Knowledge-Based Systems},
  volume={264},
  pages={110335},
  year={2023},
  publisher={Elsevier}
}

@article{fu2023,
  title={Deep learning models for serendipity recommendations: a survey and new perspectives},
  author={Fu, Zhe and Niu, Xi and Maher, Mary Lou},
  journal={ACM Computing Surveys},
  volume={56},
  number={1},
  pages={1--26},
  year={2023},
  publisher={ACM New York, NY}
}

@article{bachute2021,
  title={Autonomous driving architectures: insights of machine learning and deep learning algorithms},
  author={Bachute, Mrinal R and Subhedar, Javed M},
  journal={Machine Learning with Applications},
  volume={6},
  pages={100164},
  year={2021},
  publisher={Elsevier}
}

@article{chib2023,
  title={Recent advancements in end-to-end autonomous driving using deep learning: A survey},
  author={Chib, Pranav Singh and Singh, Pravendra},
  journal={IEEE Transactions on Intelligent Vehicles},
  year={2023},
  publisher={IEEE},
  volume={9},
  number={1},
  pages={103--118}
}

@inproceedings{choquette2021,
  title={Label-only membership inference attacks},
  author={Choquette-Choo, Christopher A and Tramer, Florian and Carlini, Nicholas and Papernot, Nicolas},
  booktitle={International conference on machine learning},
  pages={1964--1974},
  year={2021},
  organization={PMLR},
  address={Virtual},
  publisher={PMLR}
}

@inproceedings{olatunji2021,
  title={Membership inference attack on graph neural networks},
  author={Olatunji, Iyiola E and Nejdl, Wolfgang and Khosla, Megha},
  booktitle={International Conference on Trust, Privacy and Security in Intelligent Systems and Applications (TPS-ISA)},
  pages={11--20},
  year={2021},
  organization={IEEE},
  publisher={IEEE},
  address={Virtual}
}

@inproceedings{zhao2021,
  title={Exploiting explanations for model inversion attacks},
  author={Zhao, Xuejun and Zhang, Wencan and Xiao, Xiaokui and Lim, Brian},
  booktitle={IEEE/CVF international conference on computer vision},
  pages={682--692},
  year={2021},
  address={Montreal, BC, Canada},
  publisher={IEEE}

}

@article{wang2021a,
  title={Variational model inversion attacks},
  author={Wang, Kuan-Chieh and Fu, Yan and Li, Ke and Khisti, Ashish and Zemel, Richard and Makhzani, Alireza},
  journal={Advances in Neural Information Processing Systems},
  volume={34},
  pages={9706--9719},
  year={2021}
}

@inproceedings{nguyen2023,
  title={Re-thinking model inversion attacks against deep neural networks},
  author={Nguyen, Ngoc-Bao and Chandrasegaran, Keshigeyan and Abdollahzadeh, Milad and Cheung, Ngai-Man},
  booktitle={IEEE/CVF Conference on Computer Vision and Pattern Recognition},
  pages={16384--16393},
  year={2023},
  publisher = {IEEE},
  address={Vancouver, Canada}
}

@article{dwork2014,
  title={The algorithmic foundations of differential privacy},
  author={Dwork, Cynthia and Roth, Aaron and others},
  journal={Foundations and Trends{\textregistered} in Theoretical Computer Science},
  volume={9},
  number={3--4},
  pages={211--407},
  year={2014},
  publisher={Now Publishers, Inc.}
}

@inproceedings{abadi2016,
author = {Abadi, Martin and Chu, Andy and Goodfellow, Ian and McMahan, H. Brendan and Mironov, Ilya and Talwar, Kunal and Zhang, Li},
title = {Deep Learning with Differential Privacy},
year = {2016},
isbn = {9781450341394},
publisher = {Association for Computing Machinery},
address = {New York, NY, USA},
url = {https://doi.org/10.1145/2976749.2978318},
doi = {10.1145/2976749.2978318},
pages = {308–318},
numpages = {11},
keywords = {deep learning, differential privacy},
location = {Vienna, Austria},
series = {CCS '16},
booktitle = {SIGSAC Conference on Computer and Communications Security}
}

@article{andrew2021,
  title={Differentially private learning with adaptive clipping},
  author={Andrew, Galen and Thakkar, Om and McMahan, Brendan and Ramaswamy, Swaroop},
  journal={Advances in Neural Information Processing Systems},
  volume={34},
  pages={17455--17466},
  year={2021}
}

@inproceedings{yu2021a,
  author       = {Da Yu and
                  Huishuai Zhang and
                  Wei Chen and
                  Tie{-}Yan Liu},
  title        = {Do not Let Privacy Overbill Utility: Gradient Embedding Perturbation
                  for Private Learning},
  booktitle    = {International Conference on Learning Representations (ICLR)},
  publisher    = {ICLR},
  year         = {2021},
  url          = {https://openreview.net/forum?id=7aogOj\_VYO0},
  timestamp    = {Wed, 23 Jun 2021 17:36:39 +0200},
  biburl       = {https://dblp.org/rec/conf/iclr/YuZ0L21.bib},
  bibsource    = {dblp computer science bibliography, https://dblp.org},
  address={Virtual},
  pages={}
}

@inproceedings{yu2021b,
  title={Large scale private learning via low-rank reparametrization},
  author={Yu, Da and Zhang, Huishuai and Chen, Wei and Yin, Jian and Liu, Tie-Yan},
  booktitle={International Conference on Machine Learning},
  pages={12208--12218},
  year={2021},
  organization={PMLR},
  address={Virtual},
  publisher={PMLR}
}

@inproceedings{zhou2020,
  author       = {Yingxue Zhou and
                  Steven Wu and
                  Arindam Banerjee},
  title        = {Bypassing the Ambient Dimension: Private {SGD} with Gradient Subspace
                  Identification},
  booktitle    = {International Conference on Learning Representations (ICLR)},
  publisher    = {ICLR},
  year         = {2021},
  url          = {https://openreview.net/forum?id=7dpmlkBuJFC},
  timestamp    = {Tue, 28 Jun 2022 11:03:36 +0200},
  biburl       = {https://dblp.org/rec/conf/iclr/ZhouW021.bib},
  bibsource    = {dblp computer science bibliography, https://dblp.org},
  pages={1--28},
  address={Virtual}
}

@inproceedings{asi2021,
  title={Private adaptive gradient methods for convex optimization},
  author={Asi, Hilal and Duchi, John and Fallah, Alireza and Javidbakht, Omid and Talwar, Kunal},
  booktitle={International Conference on Machine Learning},
  pages={383--392},
  year={2021},
  organization={PMLR},
  publisher={PMLR},
  address={Virtual}
}

@inproceedings{lee2018,
  title={Concentrated differentially private gradient descent with adaptive per-iteration privacy budget},
  author={Lee, Jaewoo and Kifer, Daniel},
  booktitle={SIGKDD International Conference on Knowledge Discovery \& Data Mining},
  pages={1656--1665},
  year={2018},
  publisher ={ACM},
  address={London, UK}
}

@inproceedings{yu2019,
  title={Differentially private model publishing for deep learning},
  author={Yu, Lei and Liu, Ling and Pu, Calton and Gursoy, Mehmet Emre and Truex, Stacey},
  booktitle={IEEE symposium on security and privacy (SP)},
  pages={332--349},
  year={2019},
  organization={IEEE},
  address={San Francisco, CA, USA},
  publisher={IEEE}
}

@inproceedings{zhang2023,
  author       = {Xinwei Zhang and
                  Zhiqi Bu and
                  Steven Wu and
                  Mingyi Hong},
  title        = {Differentially Private {SGD} Without Clipping Bias: An Error-Feedback
                  Approach},
  booktitle    = {International Conference on Learning Representations (ICLR)},
  publisher    = {ICLR},
  year         = {2024},
  url          = {https://openreview.net/forum?id=uFbWHyTlPn},
  timestamp    = {Mon, 26 Aug 2024 11:37:38 +0200},
  biburl       = {https://dblp.org/rec/conf/iclr/ZhangB0H24.bib},
  bibsource    = {dblp computer science bibliography, https://dblp.org},
  address = {Vienna, Austria},
  pages = {1--27}
}

@article{chen2020,
  title={Understanding gradient clipping in private sgd: A geometric perspective},
  author={Chen, Xiangyi and Wu, Steven Z and Hong, Mingyi},
  journal={Advances in Neural Information Processing Systems},
  volume={33},
  pages={13773--13782},
  year={2020}
}

@article{bu2024,
  title={Automatic clipping: Differentially private deep learning made easier and stronger},
  author={Bu, Zhiqi and Wang, Yu-Xiang and Zha, Sheng and Karypis, George},
  journal={Advances in Neural Information Processing Systems},
  volume={36},
  year={2024},
  pages={}
}

@inproceedings{bethune2023,
  author       = {Louis B{\'{e}}thune and
                  Thomas Massena and
                  Thibaut Boissin and
                  Aur{\'{e}}lien Bellet and
                  Franck Mamalet and
                  Yannick Prudent and
                  Corentin Friedrich and
                  Mathieu Serrurier and
                  David Vigouroux},
  title        = {{DP-SGD} Without Clipping: The Lipschitz Neural Network Way},
  booktitle    = {International Conference on Learning Representations (ICLR)},
  publisher    = {ICLR},
  year         = {2024},
  url          = {https://openreview.net/forum?id=BEyEziZ4R6},
  timestamp    = {Wed, 07 Aug 2024 17:11:53 +0200},
  biburl       = {https://dblp.org/rec/conf/iclr/BethuneMBBMPFSV24.bib},
  bibsource    = {dblp computer science bibliography, https://dblp.org},
  address={Vienna, Austria},
  pages={}
}

@article{balle2018,
  title={Privacy amplification by subsampling: Tight analyses via couplings and divergences},
  author={Balle, Borja and Barthe, Gilles and Gaboardi, Marco},
  journal={Advances in neural information processing systems},
  volume={31},
  year={2018},
  pages={}
}

@article{chen2023b,
  title={Differentially private deep learning with dynamic privacy budget allocation and adaptive optimization},
  author={Chen, Lin and Yue, Danyang and Ding, Xiaofeng and Wang, Zuan and Choo, Kim-Kwang Raymond and Jin, Hai},
  journal={IEEE Transactions on Information Forensics and Security},
  year={2023},
  publisher={IEEE},
  pages={4422-4435},
  volume={18},
  number={1}
}

@inproceedings{yu2021c,
  author       = {Da Yu and
                  Saurabh Naik and
                  Arturs Backurs and
                  Sivakanth Gopi and
                  Huseyin A. Inan and
                  Gautam Kamath and
                  Janardhan Kulkarni and
                  Yin Tat Lee and
                  Andre Manoel and
                  Lukas Wutschitz and
                  Sergey Yekhanin and
                  Huishuai Zhang},
  title        = {Differentially Private Fine-tuning of Language Models},
  booktitle    = {International Conference on Learning Representations (ICLR)},
  publisher    = {ICLR},
  year         = {2022},
  url          = {https://openreview.net/forum?id=Q42f0dfjECO},
  timestamp    = {Sat, 20 Aug 2022 01:15:42 +0200},
  biburl       = {https://dblp.org/rec/conf/iclr/YuNBGI0KLMWYZ22.bib},
  bibsource    = {dblp computer science bibliography, https://dblp.org},
  address={Virtual},
  pages={}
}

@inproceedings{papernot2021,
  title={Tempered sigmoid activations for deep learning with differential privacy},
  author={Papernot, Nicolas and Thakurta, Abhradeep and Song, Shuang and Chien, Steve and Erlingsson, {\'U}lfar},
  booktitle={the AAAI Conference on Artificial Intelligence},
  pages={9312--9321},
  year={2021},
  address={Virtual},
  publisher = {AAAI}
  
}

@inproceedings{li2022,
  title={Private adaptive optimization with side information},
  author={Li, Tian and Zaheer, Manzil and Reddi, Sashank and Smith, Virginia},
  booktitle={International Conference on Machine Learning},
  pages={13086--13105},
  year={2022},
  organization={PMLR},
  publisher={PMLR},
  address={Hawai}
}

@inproceedings{amid2022,
  author       = {Ehsan Amid and
                  Arun Ganesh and
                  Rajiv Mathews and
                  Swaroop Ramaswamy and
                  Shuang Song and
                  Thomas Steinke and
                  Vinith M. Suriyakumar and
                  Om Thakkar and
                  Abhradeep Thakurta},
  editor       = {Kamalika Chaudhuri and
                  Stefanie Jegelka and
                  Le Song and
                  Csaba Szepesv{\'{a}}ri and
                  Gang Niu and
                  Sivan Sabato},
  title        = {Public Data-Assisted Mirror Descent for Private Model Training},
  booktitle    = {International Conference on Machine Learning (ICML)},
  series       = {Proceedings of Machine Learning Research},
  volume       = {162},
  pages        = {517--535},
  publisher    = {{PMLR}},
  year         = {2022},
  url          = {https://proceedings.mlr.press/v162/amid22a.html},
  timestamp    = {Tue, 12 Jul 2022 17:36:52 +0200},
  biburl       = {https://dblp.org/rec/conf/icml/AmidGMR00S0T22.bib},
  bibsource    = {dblp computer science bibliography, https://dblp.org},
  address={Baltimore, Maryland, {USA}}
}

@inproceedings{golatkar2022,
  title={Mixed differential privacy in computer vision},
  author={Golatkar, Aditya and Achille, Alessandro and Wang, Yu-Xiang and Roth, Aaron and Kearns, Michael and Soatto, Stefano},
  booktitle={IEEE/CVF Conference on Computer Vision and Pattern Recognition},
  pages={8376--8386},
  year={2022},
  address={Louisiana, USA},
  publisher={IEEE}
}

@inproceedings{xia2023,
  title={Differentially private learning with per-sample adaptive clipping},
  author={Xia, Tianyu and Shen, Shuheng and Yao, Su and Fu, Xinyi and Xu, Ke and Xu, Xiaolong and Fu, Xing},
  booktitle={the AAAI Conference on Artificial Intelligence},
  volume={37},
  pages={10444--10452},
  year={2023},
  publisher={AAAI},
  address={Washington, DC, USA}
}

@article{wang2021b,
  author       = {Wenxiao Wang and
                  Tianhao Wang and
                  Lun Wang and
                  Nanqing Luo and
                  Pan Zhou and
                  Dawn Song and
                  Ruoxi Jia},
  title        = {DPlis: Boosting Utility of Differentially Private Deep Learning via
                  Randomized Smoothing},
  journal      = {Privacy Enhancing Technology},
  volume       = {2021},
  number       = {4},
  pages        = {163--183},
  year         = {2021},
  url          = {https://doi.org/10.2478/popets-2021-0065},
  doi          = {10.2478/POPETS-2021-0065},
  timestamp    = {Tue, 19 Mar 2024 17:22:10 +0100},
  biburl       = {https://dblp.org/rec/journals/popets/WangWWLZSJ21.bib},
  bibsource    = {dblp computer science bibliography, https://dblp.org}
}

@inproceedings{shamsabadi2023,
  title={Losing less: A loss for differentially private deep learning},
  author={Shamsabadi, Ali Shahin and Papernot, Nicolas},
  booktitle={Privacy Enhancing Technologies (PETs)},
  year={2023},
  pages={307--320},
  address={Lausanne, Switzerland},
  publisher={PoPETs}
}

@article{young1912,
  title={On classes of summable functions and their Fourier series},
  author={Young, William Henry},
  journal={Proceedings of the Royal Society of London. Series A, Containing Papers of a Mathematical and Physical Character},
  volume={87},
  number={594},
  pages={225--229},
  year={1912},
  publisher={The Royal Society London}
}

@article{zaheer2018,
  title={Adaptive methods for nonconvex optimization},
  author={Zaheer, Manzil and Reddi, Sashank and Sachan, Devendra and Kale, Satyen and Kumar, Sanjiv},
  journal={Advances in neural information processing systems},
  volume={31},
  year={2018},
  pages={}
}

@article{gorbunov2020,
  title={Stochastic optimization with heavy-tailed noise via accelerated gradient clipping},
  author={Gorbunov, Eduard and Danilova, Marina and Gasnikov, Alexander},
  journal={Advances in Neural Information Processing Systems},
  volume={33},
  pages={15042--15053},
  year={2020}
}

@inproceedings{dwork2006,
  title={Our data, ourselves: Privacy via distributed noise generation},
  author={Dwork, C. and Kenthapadi, K. and McSherry, F. and Mironov, I. and Naor, M.},
  booktitle={International Conference on the Theory and Applications of Cryptographic Techniques},
  pages={486--503},
  year={2006},
  publisher={Springer},
  address={Lyon, France}
}

@inproceedings{zhang2020b,
  author       = {Jingzhao Zhang and
                  Tianxing He and
                  Suvrit Sra and
                  Ali Jadbabaie},
  title        = {Why Gradient Clipping Accelerates Training: {A} Theoretical Justification
                  for Adaptivity},
  booktitle    = {International Conference on Learning Representations (ICLR)},
  publisher    = {ICLR},
  year         = {2020},
  url          = {https://openreview.net/forum?id=BJgnXpVYwS},
  pages        = {1--12},
  numpages     = {12},
  timestamp    = {Thu, 07 May 2020 17:11:48 +0200},
  biburl       = {https://dblp.org/rec/conf/iclr/ZhangHSJ20.bib},
  bibsource    = {dblp computer science bibliography, https://dblp.org},
  address={Virtual}
}

@article{tanuwidjaja2020,
  title={Privacy-preserving deep learning on machine learning as a service—a comprehensive survey},
  author={Tanuwidjaja, Harry Chandra and Choi, Rakyong and Baek, Seunggeun and Kim, Kwangjo},
  journal={IEEE Access},
  volume={8},
  pages={167425--167447},
  year={2020},
  publisher={IEEE}
}

@article{boulemtafes2020,
  title={A review of privacy-preserving techniques for deep learning},
  author={Boulemtafes, Amine and Derhab, Abdelouahid and Challal, Yacine},
  journal={Neurocomputing},
  volume={384},
  pages={21--45},
  year={2020},
  publisher={Elsevier}
}

@inproceedings{fang2023,
  title={Improved convergence of differential private sgd with gradient clipping},
  author={Fang, Huang and Li, Xiaoyun and Fan, Chenglin and Li, Ping},
  booktitle={The Eleventh International Conference on Learning Representations},
  year={2023}
}

@article{deng2012,
  title={The mnist database of handwritten digit images for machine learning research [best of the web]},
  author={Deng, Li},
  journal={IEEE signal processing magazine},
  volume={29},
  number={6},
  pages={141--142},
  year={2012},
  publisher={IEEE}
}

@article{xiao2017,
  title={Fashion-mnist: a novel image dataset for benchmarking machine learning algorithms},
  author={Xiao, Han and Rasul, Kashif and Vollgraf, Roland},
  journal={arXiv preprint arXiv:1708.07747},
  year={2017},
  pages={1--6}
}

@techreport{krizhevsky2009,
 author = {Krizhevsky, Alex and Hinton, Geoffrey},
 address = {Toronto, Ontario},
 institution = {University of Toronto},
 number = {0},
 publisher = {Technical report, University of Toronto},
 title = {Learning multiple layers of features from tiny images},
 year = {2009},
 title_with_no_special_chars = {Learning multiple layers of features from tiny images},
 url = {https://www.cs.toronto.edu/~kriz/learning-features-2009-TR.pdf}
}

@inproceedings{he2016,
  title={Deep residual learning for image recognition},
  author={He, Kaiming and Zhang, Xiangyu and Ren, Shaoqing and Sun, Jian},
  booktitle={Proceedings of the IEEE conference on computer vision and pattern recognition},
  pages={770--778},
  year={2016}
}

@inproceedings{dosovitskiy2021,
  author       = {Alexey Dosovitskiy and
                  Lucas Beyer and
                  Alexander Kolesnikov and
                  Dirk Weissenborn and
                  Xiaohua Zhai and
                  Thomas Unterthiner and
                  Mostafa Dehghani and
                  Matthias Minderer and
                  Georg Heigold and
                  Sylvain Gelly and
                  Jakob Uszkoreit and
                  Neil Houlsby},
  title        = {An Image is Worth 16x16 Words: Transformers for Image Recognition
                  at Scale},
  booktitle    = {International Conference on Learning Representations},
  year         = {2021},
  url          = {https://openreview.net/forum?id=YicbFdNTTy},
  timestamp    = {Wed, 23 Jun 2021 17:36:39 +0200},
  biburl       = {https://dblp.org/rec/conf/iclr/DosovitskiyB0WZ21.bib},
  bibsource    = {dblp computer science bibliography, https://dblp.org}
}

@book{winder2002,
  title={Analog and digital filter design},
  author={Winder, Steve},
  year={2002},
  publisher={Elsevier}
}

@inproceedings{karimi2016,
  title={Linear convergence of gradient and proximal-gradient methods under the polyak-{\l}ojasiewicz condition},
  author={Karimi, Hamed and Nutini, Julie and Schmidt, Mark},
  booktitle={Joint European conference on machine learning and knowledge discovery in databases},
  pages={795--811},
  year={2016},
  organization={Springer}
}

@article{wang2018,
  title={GLUE: A multi-task benchmark and analysis platform for natural language understanding},
  author={Wang, Alex and Singh, Amanpreet and Michael, Julian and Hill, Felix and Levy, Omer and Bowman, Samuel R},
  journal={arXiv preprint arXiv:1804.07461},
  year={2018}
}

@article{liu2019,
  author       = {Yinhan Liu and
                  Myle Ott and
                  Naman Goyal and
                  Jingfei Du and
                  Mandar Joshi and
                  Danqi Chen and
                  Omer Levy and
                  Mike Lewis and
                  Luke Zettlemoyer and
                  Veselin Stoyanov},
  title        = {RoBERTa: {A} Robustly Optimized {BERT} Pretraining Approach},
  journal      = {CoRR},
  volume       = {abs/1907.11692},
  year         = {2019},
  url          = {http://arxiv.org/abs/1907.11692},
  eprinttype    = {arXiv},
  eprint       = {1907.11692},
  timestamp    = {Tue, 11 Feb 2025 12:43:01 +0100},
  biburl       = {https://dblp.org/rec/journals/corr/abs-1907-11692.bib},
  bibsource    = {dblp computer science bibliography, https://dblp.org}
}

\appendix
%\section*{Appendix}

\section{Proof of Lemma~\ref{lem.trans}}
\label{app.lem1}
We analyze how the low-pass filter aggregates historical 
momentum information while attenuating high-frequency DP noises by using the 
z-transform and a partial fraction decomposition. 
%Here, we show that applying the z-transform and performing a partial-fraction decomposition yields an expression of the form
%\[
%\hat{m}_t = \sum_{r=0}^t \hat{\kappa}_r\, \bar{v}_{t-r},
%\]
%where the unnormalized coefficients are
%\[
%\kappa_r = \sum_{r_2=0}^{\min(n_b, r)} b_{r_2} \sum_{r_1=1}^{n_a} z_{a,r_1}\, (p_{a,r_1})^{r - r_2},
%\]
%with 
%We recall Lemma~\ref{lem.trans}:
\trans*
Here,  \(z_{a,r_1}\) and \(p_{a,r_1}\) coming from the z-transform decomposition. The normalized coefficients are then defined as
\[
\hat{\kappa}_r = \frac{\kappa_r}{\sum_{r=0}^t \kappa_r}.
\]
Since \(c_{m,t} = \sum_{r=0}^t \kappa_r\), the filter forms a convex combination of past aggregated momenta \(\{\bar{v}_{t-r}\}\).

\begin{proof}
The first part of the proof follows from Appendix B in~\cite{zhang2024}. Here, we detail the derivation of the filter representation.

We start by expanding the recursive definition of \(m_t\):
\begin{align}
m_t &= -\sum_{r=1}^{n_a} a_r\, m_{t-r} + \sum_{r=0}^{n_b} b_r\, \bar{v}_{t-r}. \label{eq:recurrence}
\end{align}
Rearranging the above equation gives:
\begin{align}
m_t + \sum_{r=1}^{n_a} a_r\, m_{t-r} &= \sum_{r=0}^{n_b} b_r\, \bar{v}_{t-r}.
\end{align}
Taking the \(z\)-transform (denoted by \(\mathcal{Z}\{\cdot\}\)) on both sides and using the time-shift property \(\mathcal{Z}\{x_{t-k}\} = z^{-k}X(z)\), we obtain:
\begin{align}
\left(1 + \sum_{r=1}^{n_a} a_r z^{-r}\right) M(z) &= \left(\sum_{r=0}^{n_b} b_r z^{-r}\right) V(z). \label{eq:ztransform}
\end{align}
Dividing both sides of~\Cref{eq:ztransform} by \(1 + \sum_{r=1}^{n_a} a_r z^{-r}\) yields:
\begin{align}
M(z) &= \frac{\sum_{r=0}^{n_b} b_r z^{-r}}{1 + \sum_{r=1}^{n_a} a_r z^{-r}}\, V(z). \label{eq:Mz}
\end{align}
We now define constants \(\{z_{a,r}\}\) and \(\{p_{a,r}\}\) such that
\[
\frac{1}{1 + \sum_{r=1}^{n_a} a_r z^{-r}} = \sum_{r=1}^{n_a} \frac{z_{a,r}}{1 - p_{a,r} z^{-1}}.
\]
Substituting this into~\Cref{eq:Mz} gives:
\begin{align}
M(z) &= \sum_{r_1=1}^{n_a} \frac{z_{a,r_1}}{1 - p_{a,r_1} z^{-1}} \left(\sum_{r_2=0}^{n_b} b_{r_2} z^{-r_2}\right) V(z). \label{eq:step_c}
\end{align}
Next, we expand the term \(\frac{1}{1 - p_{a,r_1} z^{-1}}\) as a power series:
\[
\frac{1}{1 - p_{a,r_1} z^{-1}} = \sum_{r_0=0}^{\infty} \left(p_{a,r_1} z^{-1}\right)^{r_0}.
\]
Substituting this expansion into~\Cref{eq:step_c} yields:
\begin{footnotesize}
\begin{align}
M(z) &= \sum_{r_1=1}^{n_a} z_{a,r_1} \sum_{r_0=0}^{\infty} \left(p_{a,r_1} z^{-1}\right)^{r_0} \left(\sum_{r_2=0}^{n_b} b_{r_2} z^{-r_2}\right) V(z). 
\label{eq:step_d}
\end{align}
\end{footnotesize}
By rearranging the summations, we have:
\begin{align}
M(z) &= \sum_{r_2=0}^{n_b} b_{r_2} \sum_{r_1=1}^{n_a} z_{a,r_1} \sum_{r_0=0}^{\infty} \left(p_{a,r_1}\right)^{r_0} z^{-(r_0+r_2)} V(z). \label{eq:step_e}
\end{align}
Taking the inverse \(z\)-transform of~\Cref{eq:step_e} gives:
\begin{align}
m_t &= \sum_{r_2=0}^{n_b} b_{r_2} \sum_{r_1=1}^{n_a} z_{a,r_1} \, p_{a,r_1}^{\, t - r_2}\, \bar{v}_{t-r_2}, \label{eq:step_f}
\end{align}
where the exponent \(t - r_2\) corresponds to the summation index \(r_0\). We can then define the combined coefficient:
\[
\kappa_r = \sum_{r_2=0}^{\min\{n_b, r\}} b_{r_2} \sum_{r_1=1}^{n_a} z_{a,r_1}\, p_{a,r_1}^{\, r - r_2}.
\]
Thus, the sequence \(\{m_t\}\) can be written as:
\[
m_t = \sum_{r=0}^{t} \kappa_r\, \bar{v}_{t-r}.
\]
Finally, let \(c_{m,t} = \sum_{r=0}^{t} \kappa_r\) be the normalization constant. Then the normalized momentum is given by:
\[
\hat{m}_t = \frac{m_t}{c_{m,t}} = \sum_{r=0}^{t} \frac{\kappa_r}{\sum_{r=0}^{t} \kappa_r}\, \bar{v}_{t-r} = \sum_{r=0}^{t} \hat{\kappa}_r\, \bar{v}_{t-r},
\]
where we define \(\hat{\kappa}_r = \frac{\kappa_r}{\sum_{r=0}^{t} \kappa_r}\).
This completes the proof.
\end{proof}

\section{Proof of Lemma~\ref{lem.bmv}}
\label{app.lem2}

%\noindent\textbf{Sampling variance reduction via per-sample momentum.~} 
Here, we analyze the variance reduction effect of the per-sample momentum. 
Recall that for each sample \(\xi\), we compute a momentum \(v_t^{(\xi)}\) by 
averaging its gradients over the previous \(k\) iterations. This momentum is then
clipped and aggregated to form \(\bar{v}_t\), before applying the low-pass filter. 
We are interested in how closely \(v_t^{(\xi)}\) approximates the true 
gradient \(\nabla f(x_t)\) and, in particular, in the variance reduction achieved. 
We decompose the error as follows:
\begin{align*}
    & \mathbb{E}\left[\|v_t^{(\xi)} - \nabla f(x_t)\|^2\right] \\
   \leq & 2\underbrace{\mathbb{E}\left[\|v_t^{(\xi)} - \mathbb{E}[v_t^{(\xi)}]\|^2\right]}_{\text{sampling noise component}} + 2\underbrace{\|\mathbb{E}[v_t^{(\xi)}] - \nabla f(x_t)\|^2}_{\text{bias due to history}}.
\end{align*}
Under Assumptions~\ref{ass.var} (bounded variance) and~\ref{ass.ind} (independent sampling noise), the sampling noise component is reduced by a factor \(\rho^2\), where
\[
\rho = \sqrt{\frac{(1 + \beta)(1 - \beta^{k})}{(1-\beta)(1 + \beta^{k})}}.
\]
This result is formalized in the following lemma.

%We recall~\cref{lem.bmv}:
\bmv*

\begin{proof}
We begin by defining the sampling noise:
\[
\zeta_t^{(\xi)} = v_t^{(\xi)} - \nabla f(x_t).
\]
By expanding the definition of \(v_t^{(\xi)}\), we have:
\begin{footnotesize}
\begin{equation}
\label{eq:var_start}
\mathbb{E}\left[\|\zeta_t^{(\xi)}\|^2\right] 
=\mathbb{E}\left[\left\|\sum_{i=t-k+1}^{t} \frac{\beta^{t-i}}{c_\beta}\nabla f^{(\xi)}(x_i) - \nabla f(x_t)\right\|^2\right].
\end{equation}
\end{footnotesize}
We add and subtract \(\nabla f(x_i)\) inside the summation and gives:
\begin{footnotesize}
\begin{equation}\label{eq:decomp}
\mathbb{E}\Biggl[\Biggl\|\sum_{i=t-k+1}^{t} \frac{\beta^{t-i}}{c_\beta}\Bigl(\nabla f^{(\xi)}(x_i)-\nabla f(x_i) +\, \nabla f(x_i)-\nabla f(x_t)\Bigr)\Biggr\|^2\Biggr].
\end{equation}
\end{footnotesize}
Applying inequality $\|a+b\|^2 \le 2\|a\|^2 + 2\|b\|^2$ into~\Cref{eq:decomp}, we obtain:
\begin{footnotesize}
\begin{equation}\label{eq:decomp_terms}
\begin{aligned}
\mathbb{E}\left[\|\zeta_t^{(\xi)}\|^2\right] \le\,&2\,\mathbb{E}\Biggl[\Biggl\|\sum_{i=t-k+1}^{t} \frac{\beta^{t-i}}{c_\beta}\Bigl(\nabla f^{(\xi)}(x_i)-\nabla f(x_i)\Bigr)\Biggr\|^2\Biggr]\\[1mm]
&+2\,\mathbb{E}\Biggl[\Biggl\|\sum_{i=t-k+1}^{t} \frac{\beta^{t-i}}{c_\beta}\Bigl(\nabla f(x_i)-\nabla f(x_t)\Bigr)\Biggr\|^2\Biggr].
\end{aligned}
\end{equation}
\end{footnotesize}
We define:
\[
T_1 = \mathbb{E}\left[\left\|\sum_{i=t-k+1}^{t} \frac{\beta^{t-i}}{c_\beta}\Bigl(\nabla f^{(\xi)}(x_i)-\nabla f(x_i)\Bigr)\right\|^2\right].
\]
Under Assumption~\ref{ass.ind} and by linearity of expectation, we have:
\begin{footnotesize}
\begin{equation}\label{eq:bound_T1}
T_1 = \sum_{i=t-k+1}^{t} \frac{\beta^{2(t-i)}}{c_\beta^2}\mathbb{E}\left[\|\nabla f^{(\xi)}(x_i)-\nabla f(x_i)\|^2\right].
\end{equation}
\end{footnotesize}
Then, by using Assumption~\ref{ass.var}, $\mathbb{E}\left[\|\nabla f^{(\xi)}(x_i)-\nabla f(x_i)\|^2\right] \le \sigma_{SGD}^2$, we obtain:

\begin{equation}\label{eq:T1_bound}
T_1 \le \sigma_{SGD}^2 \sum_{i=t-k+1}^{t} \frac{\beta^{2(t-i)}}{c_\beta^2}.
\end{equation}
Noting that
\begin{footnotesize}
\[
c_\beta = \sum_{i=t-k+1}^{t}\beta^{t-i} = \frac{1-\beta^k}{1-\beta} \quad \text{and} \quad \sum_{i=t-k+1}^{t}\beta^{2(t-i)} = \frac{1-\beta^{2k}}{1-\beta^2},
\]
\end{footnotesize}
we obtain:
\begin{equation}\label{eq:T1_final}
T_1 \le \sigma_{SGD}^2\,\frac{\frac{1-\beta^{2k}}{1-\beta^2}}{\left(\frac{1-\beta^k}{1-\beta}\right)^2}.
\end{equation}
By defining $\rho = \sqrt{\frac{(1+\beta)(1-\beta^k)}{(1-\beta)(1+\beta^k)}}$, we have:
\begin{equation}\label{eq:rho}
\sum_{i=t-k+1}^{t} \frac{\beta^{2(t-i)}}{c_\beta^2} =\frac{\frac{1-\beta^{2k}}{1-\beta^2}}{\left(\frac{1-\beta^k}{1-\beta}\right)^2} = \frac{1}{\rho^2}.
\end{equation}
Hence,
\begin{equation}\label{eq:T1_final2}
T_1 \le \frac{\sigma_{SGD}^2}{\rho^2}.
\end{equation}
Next,  we define:
\[
T_2 = \mathbb{E}\left[\left\|\sum_{i=t-k+1}^{t} \frac{\beta^{t-i}}{c_\beta}\Bigl(\nabla f(x_i)-\nabla f(x_t)\Bigr)\right\|^2\right].
\]
Starting from the definition of \(L\)-smoothness property (Assumption~\ref{ass.lsmo}), we have:
\begin{equation}\label{eq:T2_step1}
T_2 \le L^2\,\mathbb{E}\left[\left\|\sum_{i=t-k+1}^{t} \frac{\beta^{t-i}}{c_\beta}(x_i-x_t)\right\|^2\right].
\end{equation}
Using the update rule \(x_{t+1}=x_t-\eta \hat{m}_t\) and applying Lemma~\ref{lem.trans}, we can express
\[
x_i-x_t = -\eta\sum_{l=i}^{t-1}\sum_{r=0}^l \hat{\kappa}_r \bar{v}_{l-r},
\]
so that
\begin{equation}\label{eq:T2_step2}
T_2 \le L^2\eta^2\,\mathbb{E}\left[\left\|\sum_{i=t-k+1}^{t} \frac{\beta^{t-i}}{c_\beta}\sum_{l=i}^{t-1}\sum_{r=0}^l \hat{\kappa}_r \bar{v}_{l-r}\right\|^2\right].
\end{equation}
Applying the Cauchy-Schwarz inequality to separate the summation over \(i\), we have:
\begin{footnotesize}
\begin{equation}\label{eq:T2_step3}
T_2 \le L^2\eta^2 \left(\sum_{i=t-k+1}^{t}\frac{\beta^{2(t-i)}}{c_\beta^2}\right)
\left(\sum_{i=t-k+1}^{t}\mathbb{E}\left[\left\|\sum_{l=i}^{t-1}\sum_{r=0}^l \hat{\kappa}_r \bar{v}_{l-r}\right\|^2\right]\right).
\end{equation}
\end{footnotesize}
Since the squared norm is convex, Jensen's inequality yields
\begin{equation}\label{eq:T2_step4}
\mathbb{E}\left[\left\|\sum_{l=i}^{t-1}\sum_{r=0}^l \hat{\kappa}_r \bar{v}_{l-r}\right\|^2\right] \leq k \sum_{l=i}^{t-1}\sum_{r=0}^l \hat{\kappa}_r\,\mathbb{E}\left[\|\bar{v}_{l-r}\|^2\right].
\end{equation}
As the DP noise is independent with the aggregated gradient, we obtain:
\[
\mathbb{E}\left[\|\bar{v}_{l-r}\|^2\right] \le C^2 + d\,\sigma_{DP}^2,
\]
and using \(\sum_{r=0}^l \hat{\kappa}_r = 1\), it follows that
\begin{footnotesize}
\begin{equation}\label{eq:T2_step5}
\begin{aligned}
\sum_{i=t-k+1}^{t}\mathbb{E}\left[\left\|\sum_{l=i}^{t-1}\sum_{r=0}^l \hat{\kappa}_r \bar{v}_{l-r}\right\|^2\right] \le \\[1mm] k(C^2 + d\,\sigma_{DP}^2)\sum_{i=t-k+1}^{t}(t-i).
\end{aligned}
\end{equation}
\end{footnotesize}

Since \(t-i\le k\) for \(i\in\{t-k+1,\dots,t\}\), combining \eqref{eq:T2_step3} and \eqref{eq:T2_step5} yields
\begin{equation}\label{eq:T2_final}
T_2 \le L^2\eta^2\,\left(\sum_{i=t-k+1}^{t}\frac{\beta^{2(t-i)}}{c_\beta^2}\right) k^3\,(C^2 + d\,\sigma_{DP}^2).
\end{equation}
Applying~\Cref{eq:rho} into~\Cref{eq:T2_final}, we obtain:
\begin{equation}\label{eq:T2_bound}
T_2 \le \frac{L^2\eta^2\, k^3\,(C^2 + d\,\sigma_{DP}^2)}{\rho^2}.
\end{equation}
Finally, by choosing the step size \(\eta\) to satisfy
\[
\eta \le \sqrt{\frac{\sigma_{SGD}^2}{L^2\,k^3\,(C^2 + d\,\sigma_{DP}^2)}},
\]
we ensure that
\begin{equation}\label{eq:T2_final1}
    T_2 \le \frac{\sigma_{SGD}^2}{\rho^2}.
\end{equation}
Substituting the bounds from \eqref{eq:T1_final2} and \eqref{eq:T2_final1} into \eqref{eq:decomp_terms}, we obtain
\[
\mathbb{E}\left[\|\zeta_t^{(\xi)}\|^2\right] \le 2\cdot\frac{\sigma_{SGD}^2}{\rho^2} + 2\cdot\frac{\sigma_{SGD}^2}{\rho^2}
= \frac{4\sigma_{SGD}^2}{\rho^2},
\]
which completes the proof.
\end{proof}

\section{Proof of Theorem~\ref{thm.con}}\label{app.thm1}
\con*
\begin{proof}
Firstly, by using the $L$-smoothness property from Assumption~\ref{ass.lsmo}, we have:
\begin{footnotesize}
\begin{align}\label{eqtot}
\mathbb{E}[f(x_{t+1}) - f(x_t)] \leq -\eta \mathbb{E}[\langle \nabla f(x_t), \hat{m}_t \rangle] + \frac{L\eta^2}{2} \mathbb{E}[\|\hat{m}_t\|^2].
\end{align}
\end{footnotesize}
Applying Lemma~\ref{lem.trans} to the term $-\eta \mathbb{E}[\langle \nabla f(x_t), \hat{m}_t \rangle]$, we have:
\begin{equation}\label{eqtot1}
    -\eta \mathbb{E}[\langle \nabla f(x_t), \hat{m}_t \rangle] = -\eta \mathbb{E}[\langle \nabla f(x_t), \sum_{r=0}^t \hat{\kappa}_r \bar{v}_{t-r} \rangle],
\end{equation}
then, using the linearity of expectation and inner product, we obtain:
\begin{equation}\label{eqtot1.s1}
-\eta \mathbb{E}[\langle \nabla f(x_t), \sum_{r=0}^t \hat{\kappa}_r \bar{v}_{t-r} \rangle] = -\eta \sum_{r=0}^t \hat{\kappa}_r \mathbb{E}[\langle \nabla f(x_t), \bar{v}_{t-r} \rangle].
\end{equation}
Expanding $\bar{v}_{t-r}$ yields:
\begin{align}\label{eqtot1.s2}
& -\eta \sum_{r=0}^t \hat{\kappa}_r \mathbb{E}[\langle \nabla f(x_t), \bar{v}_{t-r} \rangle] \notag \\
= & -\eta \sum_{r=0}^t \hat{\kappa}_r \mathbb{E}[\langle \nabla f(x_t), \frac{1}{B}\sum_{\xi \in \mathcal{B}_t} \tilde{v}_{t-r}^{(\xi)} + w_{t-r} \rangle].
\end{align}
Substituting~\Cref{eqtot1.s1} and~\Cref{eqtot1.s2} into~\Cref{eqtot1} and applying $\mathbb{E}[w_{t-r}] = 0$, we obtain:
\begin{equation}
    -\eta \mathbb{E}[\langle \nabla f(x_t), \hat{m}_t \rangle] = -\frac{\eta}{B} \sum_{r=0}^t \hat{\kappa}_r \sum_{\xi \in \mathcal{B}_t} \langle \nabla f(x_t), \mathbb{E}[\tilde{v}_{t-r}^{(\xi)}] \rangle.
\end{equation}
Adding and subtracting $\mathbb{E}[v_{t-r}^{(\xi)}]$ inside the summation, we have:
\begin{align}
& -\eta \mathbb{E}[\langle \nabla f(x_t), \hat{m}_t \rangle] \notag \\
=& -\frac{\eta}{B} \sum_{r=0}^t \hat{\kappa}_r \sum_{\xi \in \mathcal{B}_t} \langle \nabla f(x_t), \mathbb{E}[v_{t-r}^{(\xi)}] \rangle \notag \\
& - \frac{\eta}{B} \sum_{r=0}^t \hat{\kappa}_r \sum_{\xi \in \mathcal{B}_t} \langle \nabla f(x_t), \mathbb{E}[\tilde{v}_{t-r}^{(\xi)}-v_{t-r}^{(\xi)}] \rangle,
\end{align}
We start analyzing the first term $-\frac{\eta}{B} \sum_{r=0}^t \hat{\kappa}_r \sum_{\xi \in \mathcal{B}_t} \langle \nabla f(x_t), \mathbb{E}[v_{t-r}^{(\xi)}] \rangle$.
For $\langle \nabla f(x_t), \mathbb{E}[v_{t-r}^{(\xi)}] \rangle$, by expanding $v_{t-r}^{(\xi)}$, we have:
\begin{footnotesize}
\begin{equation}
\begin{aligned}
  \langle \nabla f(x_t), \mathbb{E}[v_{t-r}^{(\xi)}] \rangle = \left\langle \nabla f(x), \mathbb{E}\left[\sum_{i=t-r-k+1}^{t-r} \hat{\beta}^{t-r-i} \nabla f^{(\xi)}(x_i)\right] \right\rangle.
  \end{aligned}
\end{equation}
\end{footnotesize}

Using $\mathbb{E}[\nabla f^{(\xi)}(x_{t-r})] = \nabla f(x_{t-r})$, we obtain:
\begin{equation}
    \langle \nabla f(x_t), \mathbb{E}[v_{t-r}^{(\xi)}] \rangle = \sum_{i=t-r-k+1}^{t-r} \hat{\beta}^{t-r-i} \langle \nabla f(x_t), \nabla f(x_i) \rangle,
\end{equation}
then, applying Assumption~\ref{ass.corr} yields
\begin{align}
    &\langle \nabla f(x_t), \mathbb{E}[v_{t-r}^{(\xi)}] \rangle \notag\\
    \geq& \sum_{i=t-r-k+1}^{t-r} \hat{\beta}^{t-r-i} (c_{t-i}\|\nabla f(x_t)\|^2 + c_{-(t-i)}\|\nabla f(x_i)\|^2).
\end{align}
Therefore, we have
\begin{footnotesize}
\begin{align}\label{eq1}
     &-\eta \sum_{r=0}^t \hat{\kappa}_r \sum_{\xi \in \mathcal{B}_t} \langle \nabla f(x_t), \mathbb{E}[v_{t-r}^{(\xi)}] \rangle \notag \\
     \leq & -\eta \sum_{r=0}^t \hat{\kappa}_r \sum_{i=t-r-k+1}^{t-r} \hat{\beta}^{t-r-i} (c_{t-i}\|\nabla f(x_t)\|^2 + c_{-(t-i)}\|\nabla f(x_i)\|^2).
\end{align} 
\end{footnotesize}
For the second term $- \frac{\eta}{B} \sum_{r=0}^t \hat{\kappa}_r \sum_{\xi \in \mathcal{B}_t} \langle \nabla f(x_t), \mathbb{E}[\tilde{v}_{t-r}^{(\xi)}-v_{t-r}^{(\xi)}] \rangle$, we start by analyzing $ \|\mathbb{E}[\tilde{v}_{t-r}^{(\xi)}-v_{t-r}^{(\xi)}]\|$.
By using the definition of clipping, we have:
\begin{footnotesize}
\begin{equation}
    \|\mathbb{E}[\tilde{v}_{t-r}^{(\xi)}-v_{t-r}^{(\xi)}]\| = \left\|\mathbb{E}\left[\left(1-\frac{C}{\|v_{t-r}^{(\xi)}\|}\right)v_{t-r}^{(\xi)} \cdot \mathbf{1}_{\{\|v_{t-r}^{(\xi)}\|>C\}}\right]\right\|.
\end{equation}
\end{footnotesize}

Utilizing $(1-\frac{C}{\|v_{t-r}^{(\xi)}\|}) \leq 1$ and Jensen's Inequality, we obtain:
\begin{equation}
    \|\mathbb{E}[\tilde{v}_{t-r}^{(\xi)}-v_{t-r}^{(\xi)}]\| \leq \mathbb{E}[\|v_{t-r}^{(\xi)}\| \cdot \mathbf{1}_{\{\|v_{t-r}^{(\xi)}\|>C\}}.
\end{equation}
Adding and subtracting $\nabla f(x_{t-r})$ yields:
\begin{footnotesize}
\begin{equation}
     \|\mathbb{E}[\tilde{v}_{t-r}^{(\xi)}-v_{t-r}^{(\xi)}]\| =  \mathbb{E}[\|v_{t-r}^{(\xi)}-\nabla f(x_{t-r})+\nabla f(x_{t-r})\| \cdot \mathbf{1}_{\{\|v_{t-r}^{(\xi)}\|>C\}}].
\end{equation}
\end{footnotesize}

Then, applying Cauchy-Schwarz Inequality and $P(\|v_{t-r}^{(\xi)}\| > C) \leq 1$, we have: 
\begin{footnotesize}
\begin{align}
    &\|\mathbb{E}[\tilde{v}_{t-r}^{(\xi)}-v_{t-r}^{(\xi)}]\| \notag\\
    \leq & \left(\|\nabla f(x_{t-r})\| + \sqrt{\mathbb{E}[\|v_{t-r}^{(\xi)}-\nabla f(x_{t-r})\|^2]}\right) \cdot P(\|v_{t-r}^{(\xi)}\| > C) \notag\\
    \leq & \|\nabla f(x_{t-r})\| + \sqrt{\mathbb{E}[\|v_{t-r}^{(\xi)}-\nabla f(x_{t-r})\|^2]}.
\end{align}
\end{footnotesize}

Next, utilizing Lemma~\ref{lem.bmv}, we obtain:
\begin{equation}\label{eq2.1}
     \|\mathbb{E}[\tilde{v}_{t-r}^{(\xi)}-v_{t-r}^{(\xi)}]\| \leq \|\nabla f(x_{t-r})\| + \frac{2\sigma_{SGD}}{\rho}.
\end{equation}
Then, after applying $\langle a,b\rangle \geq - \|a\|\|b|,$ we have:
\begin{align}\label{eq2i}
    &- \frac{\eta}{B} \sum_{r=0}^t \hat{\kappa}_r \sum_{\xi \in \mathcal{B}_t} \langle \nabla f(x_t), \mathbb{E}[\tilde{v}_{t-r}^{(\xi)}-v_{t-r}^{(\xi)}] \rangle \notag\\
    \leq & \eta \sum_{r=0}^t \hat{\kappa}_r  \|\nabla f(x_t)\| \|\mathbb{E}[\tilde{v}_{t-r}^{(\xi)}-v_{t-r}^{(\xi)}]\|.
\end{align}
Substituting~\Cref{eq2.1} into~\Cref{eq2i}, we obtain:
\begin{align}
     &- \frac{\eta}{B} \sum_{r=0}^t \hat{\kappa}_r \sum_{\xi \in \mathcal{B}_t} \langle \nabla f(x_t), \mathbb{E}[\tilde{v}_{t-r}^{(\xi)}-v_{t-r}^{(\xi)}] \rangle \notag\\
    \leq & \eta \sum_{r=0}^t \hat{\kappa}_r  \|\nabla f(x_t)\| \left(\|\nabla f(x_{t-r})\| + \frac{2\sigma_{SGD}}{\rho}\right)
\end{align}

Applying Young's Inequality~\citep{young1912}, we obtain the bound of the second term:
\begin{footnotesize}
\begin{align}\label{eq2}
    &- \frac{\eta}{B} \sum_{r=0}^t \hat{\kappa}_r \sum_{\xi \in \mathcal{B}_t} \langle \nabla f(x_t), \mathbb{E}[\tilde{v}_{t-r}^{(\xi)}-v_{t-r}^{(\xi)}] \rangle \notag\\
    ~~~~~~\leq & \frac{\eta}{2} \sum_{r=0}^t \hat{\kappa}_r \sum_{i=t-r-k+1}^{t-r} \hat{\beta}^{t-r-i} c_{t-i}\|\nabla f(x_t)\|^2 \notag \\
    & + \eta \sum_{r=0}^t \frac{\hat{\kappa}_r}{\sum_{i=t-r-k+1}^{t-r} \hat{\beta}^{t-r-i} c_{t-i}} \left(\|\nabla f(x_{t-r})\|^2 + \frac{4\sigma_{SGD}^2}{\rho^2}\right)
\end{align}
\end{footnotesize}

For simplicity, we use $\hat{c}_r$ to represent $\sum_{i=t-r-k+1}^{t-r} \hat{\beta}^{t-r-i} c_{t-i}$ and $\hat{c}_{-r}$ to represent $\sum_{i=t-r-k+1}^{t-r} \hat{\beta}^{t-r-i} c_{-(t-i)}$. Substituting (\ref{eq1}) and (\ref{eq2}) into (\ref{eqtot}), we have:
\begin{equation*}
\begin{aligned}
    &\mathbb{E}[f(x_{t+1}) - f(x_t)] \notag \\
    \leq & -\eta \mathbb{E}[\langle \nabla f(x_t), \hat{m}_t \rangle] + \frac{L\eta^2}{2} \mathbb{E}[\|\hat{m}_t\|^2] \notag\\
    \leq & -\frac{\eta}{2} \sum_{r=0}^t \hat{\kappa}_r \hat{c}_r\|\nabla f(x_t)\|^2 \notag\\
    &-\eta \sum_{r=0}^t \hat{\kappa}_r \sum_{i=t-r-k+1}^{t-r-i} \hat{\beta}^{t-r-i} c_{-(t-i)}\|\nabla f(x_i)\|^2 \notag\\
    &+ \eta\sum_{r=0}^t \hat{\kappa}_r (\frac{1}{\hat{c}_{r}} - \frac{c_{-r}}{c_\beta})\|\nabla f(x_{t-r})\|^2 \notag\\
    &+ \eta\sum_{r=0}^t  \frac{\hat{\kappa}_r}{\hat{c}_{r}}\frac{4\sigma_{SGD}^2}{\rho^2} + \frac{L\eta^2}{2} \mathbb{E}[\|\hat{m}_t\|^2].
    \end{aligned}
\end{equation*}
Dropping the negative term, we obtain:
\begin{align}
    &\mathbb{E}[f(x_{t+1}) - f(x_t)] \notag \\
    \leq& -\frac{\eta}{2} \sum_{r=0}^t \hat{\kappa}_r \hat{c}_r\|\nabla f(x_t)\|^2 \notag\\
    &+ \eta\sum_{r=0}^t \hat{\kappa}_r (\frac{1}{\hat{c}_{r}} - \frac{c_{-r}}{c_\beta})\|\nabla f(x_{t-r})\|^2 \notag\\
    &+ \eta\sum_{r=0}^t  \frac{\hat{\kappa}_r}{\hat{c}_{r}}\frac{4\sigma_{SGD}^2}{\rho^2} + \frac{L\eta^2}{2} \mathbb{E}[\|\hat{m}_t\|^2].
\end{align}
Next, utilizing Assumption~\ref{ass.bod} yields:
\begin{equation*}
\begin{aligned}
    &\mathbb{E}[f(x_{t+1}) - f(x_t)] \notag \\
     \leq & -\frac{\eta}{2} \sum_{r=0}^t \hat{\kappa}_r \hat{c}_r\|\nabla f(x_t)\|^2 \notag\\
     &+ \eta\sum_{r=0}^t \hat{\kappa}_r (\frac{1}{\hat{c}_{r}} - \frac{c_{-r}}{c_\beta})G^2 \notag \\
     & + \eta\sum_{r=0}^t  \frac{\hat{\kappa}_r}{\hat{c}_{r}}\frac{4\sigma_{SGD}^2}{\rho^2} + \frac{L\eta^2}{2} \mathbb{E}[\|\hat{m}_t\|^2].
\end{aligned}
\end{equation*}
Then, given the DP noise as independent for different iterations, we could bound the term $\mathbb{E}[\|\hat{m}_t\|^2]$ and have:
\begin{align}
    &\mathbb{E}[f(x_{t+1}) - f(x_t)] \notag \\
      \leq& -\frac{\eta}{2} \sum_{r=0}^t \hat{\kappa}_r \hat{c}_r\|\nabla f(x_t)\|^2 + \eta\sum_{r=0}^t \hat{\kappa}_r (\frac{1}{\hat{c}_{r}} - \frac{c_{-r}}{c_\beta})G^2 \notag \\
      & + \eta\sum_{r=0}^t  \frac{\hat{\kappa}_r}{\hat{c}_{r}}\frac{4\sigma_{SGD}^2}{\rho^2} + \frac{L\eta^2}{2} C^2 + \frac{L\eta^2}{2}\sum_{r=0}^t \hat{\kappa}_r^2\, d\,\sigma_{DP}^2.
\end{align}
Summing over $t = 0,\cdots, T-1$ and assuming after $T$ steps, the loss function $f$ reaches to its lower bound, which means $f(x_T) = f^*$. We obtain:
\begin{align}
    &f^* - f(x_0) \notag \\
      \leq&  -\frac{\eta}{2} \sum_{t=0}^{T-1}\sum_{r=0}^t \hat{\kappa}_r \hat{c}_r\|\nabla f(x_t)\|^2 \notag\\
      &+ \eta\sum_{t=0}^{T-1}\sum_{r=0}^t \hat{\kappa}_r (\frac{1}{\hat{c}_{r}} - \frac{c_{-r}}{c_\beta})G^2 \notag \\
      & + \eta\sum_{t=0}^{T-1}\sum_{r=0}^t  \frac{\hat{\kappa}_r}{\hat{c}_{r}}\frac{4\sigma_{SGD}^2}{\rho^2} + \sum_{t=0}^{T-1}\frac{L\eta^2}{2} C^2 \notag\\
      &+ \frac{L\eta^2}{2}\sum_{t=0}^{T-1}\sum_{r=0}^t \hat{\kappa}_r^2\, d\,\sigma_{DP}^2.
\end{align}
Rearranging the above equation yields
\begin{align}
    & \frac{\eta}{2} \sum_{t=0}^{T-1}\sum_{r=0}^t \hat{\kappa}_r \hat{c}_r\|\nabla f(x_t)\|^2 \notag \\
      \leq& f(x_0) - f^* + \eta\sum_{t=0}^{T-1}\sum_{r=0}^t \hat{\kappa}_r (\frac{1}{\hat{c}_{r}} - \frac{c_{-r}}{c_\beta})G^2 \notag \\
      & + \eta\sum_{t=0}^{T-1}\sum_{r=0}^t  \frac{\hat{\kappa}_r}{\hat{c}_{r}}\frac{4\sigma_{SGD}^2}{\rho^2} + \sum_{t=0}^{T-1}\frac{L\eta^2}{2} C^2 \notag\\
      &+ \frac{L\eta^2}{2}\sum_{t=0}^{T-1}\sum_{r=0}^t \hat{\kappa}_r^2\, d\,\sigma_{DP}^2.
\end{align}

\begin{figure*}[t!]
    \centering
    \includegraphics[width=1.0\textwidth]{./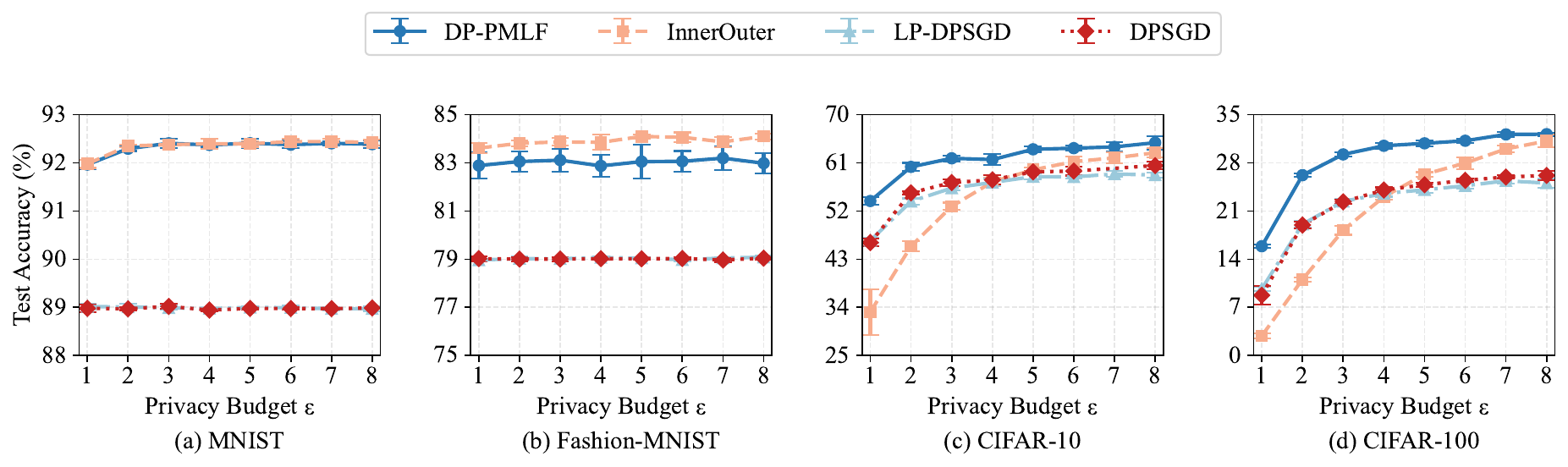}
    \caption{Test accuracy (\%) comparison across datasets on CNN-5 with fixed epoch (Epoch = 25 for MNIST and Fashion-MNIST, Epoch = 50 for CIFAR-10 and CIFAR-100) and different privacy budgets $\epsilon \in [1, 8].$}
    \label{fig.epsilon}
\end{figure*}

Finally, utilize $\sum_{t=0}^{T-1}\sum_{r=0}^t \hat{\kappa}_r \hat{c}_r = \mathcal{O}(T)$, let $\Gamma_{DP} = \frac{\sum_{t=0}^{T-1}\sum_{r=0}^t \hat{\kappa}_r \hat{c}_r}{\sum_{t=0}^{T-1}\sum_{r=0}^t \hat{\kappa}_r^2}$ and $\Gamma_{SGD} = \frac{\sum_{t=0}^{T-1}\sum_{r=0}^t \hat{\kappa}_r \hat{c}_r}{\sum_{t=0}^{T-1}\sum_{r=0}^t  \frac{\hat{\kappa}_r}{\hat{c}_{r}}}$, we obtain that the upper bound of 
$\mathbb{E}[\|\nabla f(x_t)\|^2]$ is
\begin{scriptsize}
$\mathcal{O}\left(\frac{f(x_0) - f^*}{\eta T}  + L\eta C^2 + \frac{L\eta}{\Gamma_{DP}}\, d\, \sigma^2_{DP} + \left(\frac{G^2}{\Gamma_{SGD}} +\frac{\sigma_{SGD}^2}{\rho^2 \, \Gamma_{SGD}}\right)\right).$    
\end{scriptsize}
\end{proof}
which completes the proof.
\section{Convergence Analysis of LP-DPSGD and DPSGD}\label{app.dd}
The main difference in convergence analysis between LP-DPSGD, vanilla DPSGD and our method DP-PMLF lies in the modification of the bound in Equation~\ref{eq2.1}. 

For LP-DPSGD, the bound in Equation~\ref{eq2.1} is adjusted to the term
\[
\|\nabla f(x_{t-r})\| + \sigma_{SGD},
\]
causing by the absence of per-sample momentum for variance reduction. The remainder of the proof remains the same, with an increased bias term in the convergence. Consequently, the upper bound on \(\mathbb{E}[\|\nabla f(x_t)\|^2]\) for LP-DPSGD is given by
\[
\mathcal{O}\left(\frac{f(x_0) - f^*}{\eta T}  + L\eta C^2 + \frac{L\eta}{\Gamma_{DP}}\, d\, \sigma^2_{DP} + \left(\frac{G^2 + \sigma_{SGD}^2}{\Gamma_{SGD}}\right)\right).
\]

In contrast, for vanilla DPSGD, the bound in Equation~\ref{eq2.1} is modified to yield an order of \(\mathcal{O}(\sigma_{SGD})\). Specifically, when \(\|\nabla f(x_t)\| \geq \sigma_{SGD}\), the magnitude of the true gradient is sufficient to compensate for the sampling variance, as detailed in Appendix C.2 of \citep{koloskova2023a}. The remainder of the proof remains unchanged, with the analysis omitting the low-pass filter coefficients. Therefore, the corresponding upper bound on \(\mathbb{E}[\|\nabla f(x_t)\|^2]\) for vanilla DPSGD is
\[
\mathcal{O}\left(\frac{f(x_0) - f^*}{\eta T} + L\eta C^2 + L\eta \, d\, \sigma^2_{DP} + \sigma_{SGD}^2\right).
\]

\section{Missing experiment details in the main paper}
\label{sec:expsetup}
In this section, we provide the missing details for the experiments in the main 
paper and additional experiments.
All methods are implemented using Python (version 3) and all experiments are run on
an i7-7800X CPU with a NVIDIA RTX A6000 GPU (48 GB). 
%For repeatability, we have made the code of our approach DP-PMLF available, which can be accessed through \url{https://anonymous.4open.science/r/DP-PMLF-521F}.%

\begin{table*}[t]
\centering
\caption{Test accuracy (\%) comparison across datasets on ResNet-18 with fixed epoch (Epoch = 25 for MNIST and Fashion-MNIST, Epoch = 50 for CIFAR-10 and CIFAR-100) and different privacy budgets $\epsilon = 1, 8$.}
\resizebox{\textwidth}{!}{%
\begin{tabular}{l|c|c|c|c|c|c|c|c}
\hline
Method & \multicolumn{2}{c|}{MNIST} & \multicolumn{2}{c|}{Fashion-MNIST} & \multicolumn{2}{c|}{CIFAR-10} & \multicolumn{2}{c}{CIFAR-100} \\
\cline{2-9}
& $\epsilon{=}1$ & $\epsilon{=}8$ & $\epsilon{=}1$ & $\epsilon{=}8$ & $\epsilon{=}1$ & $\epsilon{=}8$ & $\epsilon{=}1$ & $\epsilon{=}8$ \\
\hline
DPSGD & 89.01 ± 0.06 & 89.03 ± 0.04 & 79.05 ± 0.08 & 79.00 ± 0.08 & 41.61 ± 1.17 & 51.98 ± 0.55 & 5.79 ± 0.32 & 14.44 ± 0.39 \\
LP-DPSGD & 89.00 ± 0.05 & 89.02 ± 0.03 & 79.02 ± 0.11 & 79.02 ± 0.09 & 41.60 ± 1.02
 & 52.45 ± 1.29 & 6.11 ± 0.48 & 13.34 ± 0.09 \\
InnerOuter & 92.03 ± 0.09 & \textbf{92.46 ± 0.03} & \textbf{83.69 ± 0.15} & \textbf{83.93 ± 0.16} & 45.18 ± 0.66 & 57.65 ± 0.87 & 7.45 ± 0.30 & 21.57 ± 0.97 \\
DP-PMLF & \textbf{92.04 ± 0.04} & 92.38 ± 0.10 & 83.30 ± 0.35 & 83.50 ± 0.32 & \textbf{47.95 ± 0.91
} & \textbf{58.27 ± 1.52} & \textbf{11.49 ± 0.57} & \textbf{21.64 ± 0.38} \\
\hline
\end{tabular}
}
\label{tab.res}
\end{table*}

\subsection{Experimental Setup and Parameters}
\label{sec:addexp_setup}

\noindent\textbf{Dataset.} 
We evaluate our approach on four image classification datasets: 
MNIST~\citep{deng2012}, Fashion-MNIST~\citep{xiao2017}, 
CIFAR-10~\citep{krizhevsky2009}, and CIFAR-100~\citep{krizhevsky2009}. 
MNIST consists of 70,000 grayscale images of handwritten digits (0-9), with 60,000 
used for training and 10,000 for testing. Fashion-MNIST contains 70,000 grayscale
images of clothing items across 10 categories, with the same training and test 
split as MNIST. CIFAR-10 comprises 60,000 32×32 color images across 10 object 
categories, with 50,000 samples used for training and 10,000 used for testing. 
CIFAR-100 consists of 60,000 32×32 color images distributed across 100 categories, 
with the same training and test split as CIFAR-10. All dataset splits are directly
derived from the original definitions in prior work.

The details of the sentence classification datasets are shown as follows: MNLI consists of approximately 393,000 sentence pairs for training, where the task is to classify the relationship as entailment, neutral, or contradiction. QNLI is used to determine whether a sentence contains the answer to a question, with over 105,000 training pairs. QQP contains around 364,000 pairs of questions, where the goal is to determine if they are semantically equivalent. SST-2 is used for binary sentiment classification on movie reviews, providing over 67,000 training samples.

\begin{figure*}[t]
    \centering
    \includegraphics[width=0.8\textwidth]{./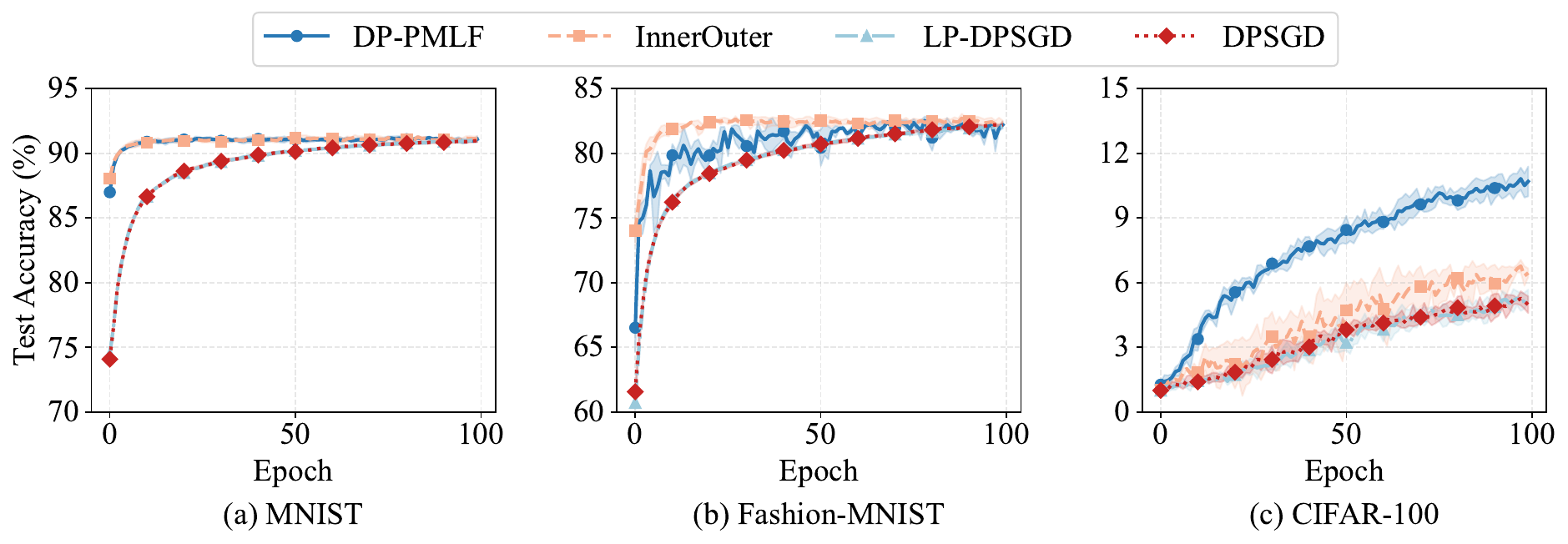}
    \caption{Test Accuracy (\%) Comparison Across Different Datasets on Resnet-18 with Fixed Privacy Budget $\epsilon =1.$}
    \label{fig.dataset}
\end{figure*}

%\vspace{0.2cm}
\noindent\textbf{Baselines.} 
We compare the test accuracy of our DP-PMLF method with vanilla 
DPSGD~\citep{abadi2016} and two state-of-the-art methods 
LP-DPSGD~\citep{zhang2024} and InnerOuter~\citep{xiao2023}. 
All these approaches can be applied directly to any deep learning models. 
We selected these baselines, as their designs are supported by theoretical
guarantees of convergence. We also notice that there can be some other DPSGD 
variants provide high test accuracy under some specific model parameters. 
However, a comparison with all those different DPSGD variants is beyond the
scope of this work and we have left such comparison for future work.   

%\vspace{0.2cm}
\noindent\textbf{Models.} 
In our experiments, we utilized three models: a 5-layer CNN~\citep{zhang2024}, ResNet-18~\citep{he2016}, and the Vision Transformer (ViT)~\citep{dosovitskiy2021}. 
these models are initialized with random weights without pretraining.
The CNN model comprises five convolutional layers, each followed by a Tanh activation
function and max-pooling operation. ResNet-18 is a residual network with 18 layers
incorporating shortcut connections. The ViT model applies the transformer to 
image classification by segmenting images into patches and processing them 
as sequences.

%\vspace{0.2cm}
\noindent\textbf{Hyper-parameters.}
The parameter choices are based on settings commonly used in the literature, 
as well as insights gained from preliminary experiments. %We also conduct hyper-parameter analysis in Section~\ref{sec.ps}. 

For all experiments, we set clipping threshold $C = 1$, learning rate 
$\eta = 0.5$, batch size $B = 1000$, and failure factor $\delta = \frac{1}{n}$, 
where $n$ is the number of samples in each dataset. We set the number of epochs 
to 25 for MNIST and Fashion-MNIST, 20 for CIFAR-10 and CIFAR-100, and 20 for all sentence classification datasets. 
For the hyper-parameters of per-sample momentum, we set the per-sample momentum 
factor $\beta = 0.1$, the momentum length $k = 2$. For the hyper-parameters of 
low-pass filtering, we set $a = \{-0.9\}, b = \{0.1\}$ for MNIST and 
Fashion-MNIST and $a = \{-0.9\}, b = \{0.15, -0.05\}$ for CIFAR-10 and 
CIFAR-100.

\subsection{Additional Experiments: Different datasets and models}
\label{sec:addexp_data}

%\vspace{0.2cm}
Figure~\ref{fig.epsilon} illustrates the trend of performance as $\epsilon$ ranges from 1 to 8 for the CNN-5 model, and we report the test accuracy for $\epsilon = 1$ and $\epsilon = 8$ for ResNet-18 in Table~\ref{tab.res}. 
%Although we performed experiments with $\epsilon < 1$, the results showed a trend similar to $\epsilon \geq 1$ and therefore due to space constraints, we omit them here.

As illustrated in Figure~\ref{fig.epsilon} for the CNN-5 model, the proposed DP-PMLF method consistently outperforms the baselines in most scenarios. For instance, on MNIST at \(\epsilon = 1\), DP-PMLF achieves approximately 92\% test accuracy, which is about 3\% higher than vanilla DP-SGD. As \(\epsilon\) increases to 8, its accuracy further improves to around 92.4\%. Likewise, on CIFAR-10, DP-PMLF attains an accuracy of approximately 54\% at \(\epsilon = 1\) (compared to about 46\% for the next-best baseline), and increases to nearly 65\% at \(\epsilon = 8\).

%The results for more complex models in 
Table~\ref{tab.res} further demonstrate the effectiveness of our approach on ResNet-18. 
%In Table~\ref{tab.res}, which presents the performance on ResNet-18, o
Our method outperforms vanilla DP-SGD by approximately 3-4\% on MNIST and Fashion-MNIST with both \(\epsilon = 1\) and \(\epsilon = 8\). On CIFAR-10 and CIFAR-100, the performance gains are around 6-7\%. 
%Similarly, the results in Table~\ref{tab.vit}, which focuses on ViT, confirm that DP-PMLF maintains its leading performance. For instance, on Fashion-MNIST, it achieves accuracies of about 80.65\% at \(\epsilon = 1\) and 81.93\% at \(\epsilon = 8\). On CIFAR-100, our approach maintains a 4--5\% margin over the next-best baselines across both privacy budgets.

However, on less complex datasets (such as MNIST and Fashion-MNIST) or when using models with fewer parameters (such as CNN-5 and Resnet-18), the performance of DP-PMLF is marginally worse than that of InnerOuter. For instance, there is a 0.39\% drop at \(\epsilon = 1\) and a 0.43\% drop at \(\epsilon = 8\) on Fashion-MNIST using Resnet-18 (Table~\ref{tab.res}). This is because the training iterations in those experimental scenarios are less affected by DP noise, resulting in gradients to be closely similar to those in a non-private setting. Thus, the normalization in per-sample momentum in DP-PMLF reduces gradient magnitude, leading to slower convergence under the same learning rate.

%Under a high DP noise regime ($\epsilon = 1$), as shown in Table~\ref{tab.vit}, the InnerOuter method shows a degradation in performance. This is because the InnerOuter method lacks normalization and suffers from excessive noise accumulation. In contrast, DP-PMLF utilizes both normalization and a low-pass filter allowing our approach to better control and filter DP noise, thereby achieving superior results.

As another set of experiments, we set the privacy budget $\epsilon=1$ and evaluate test accuracy over $T=100$ epochs. As shown in Figure~\ref{fig.dataset}, we compare the test accuracy of our approach on different datasets with the Resnet-18 model. In particular, DP-PMLF achieves higher test accuracy than the baselines on MNIST and CIFAR-100, reaching approximately 91\% and 11\%, respectively. Although its convergence speed on Fashion-MNIST is initially slower than that of InnerOuter, DP-PMLF ultimately converges to about 82.35\% accuracy at epoch 100, matching InnerOuter’s performance.

\subsection{Additional Experiments: Hyper-parameter Analysis}
\label{sec.ps}
%\subsection{\w{Hyper-parameter Analysis}}\label{sec.ps}
We examine hyper-parameters for the per-sample momentum and low-pass filter, providing empirical guidance for their selection. Figures~\ref{fig.beta},~\ref{fig.k}, and Table~\ref{tab.coef} illustrate the effects of per-sample momentum weight $\beta$, per-sample momentum length $k$, and low-pass filter coefficients ($a$ and $b$) on various datasets and privacy budgets, respectively.

As discussed in Section~\ref{sec.ca}, larger values of $\beta$ and $k$ increase the influence of historical gradients, reducing sampling variance and clipping bias but potentially slowing convergence. 
As illustrated in Figure~\ref{fig.beta}, for MNIST, the accuracy can even decrease for the same $\beta$ value as $\epsilon$ increases, particularly at $\epsilon = 8$. This phenomenon arises because of the assignment of excessive weight to historical gradients which potentially slow the convergence speed. 

\begin{figure}[t!]
    \centering
    \includegraphics[width=0.5\textwidth]{./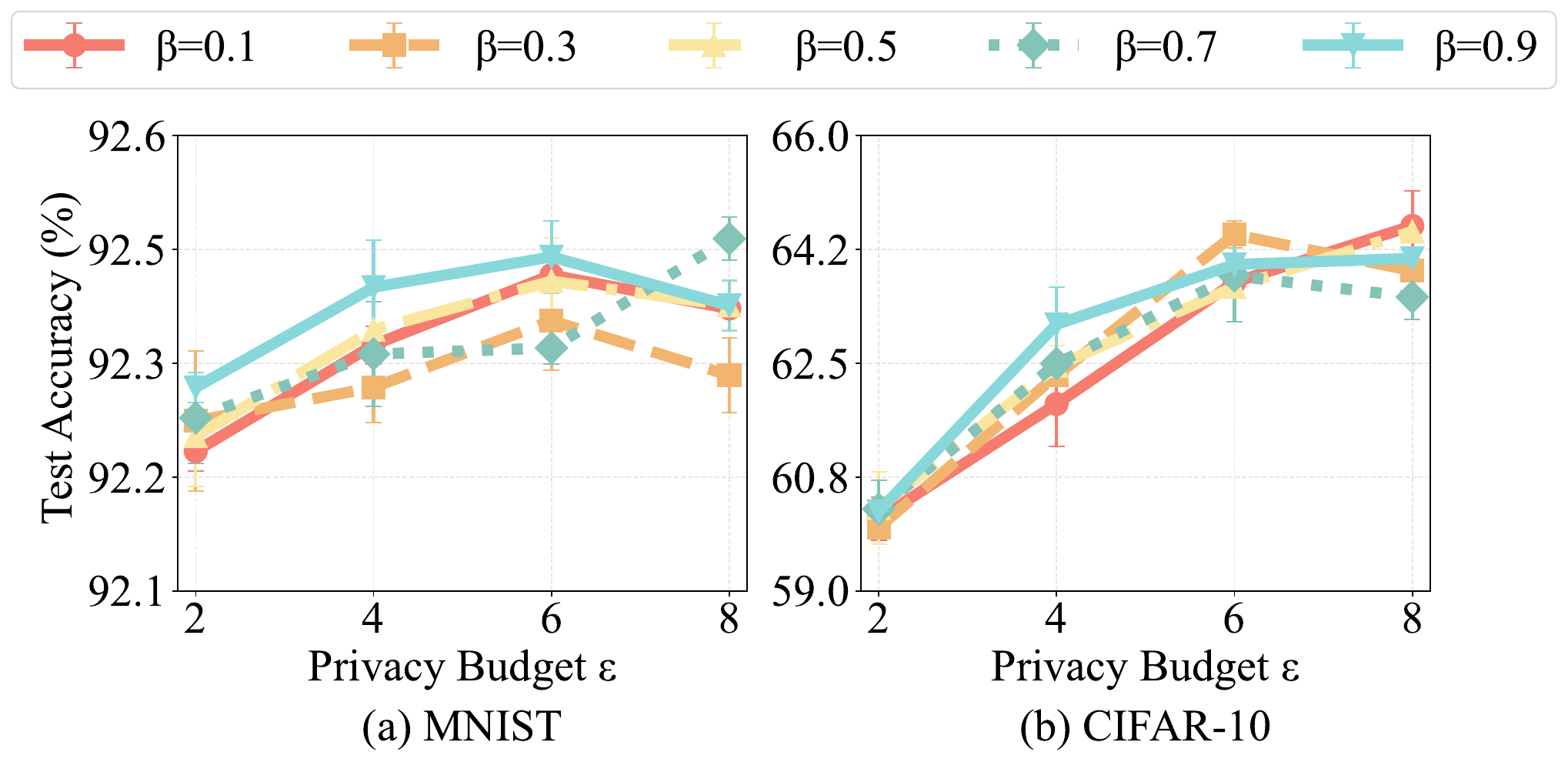}
    \caption{Test accuracy (\%) for different Per-sample Momentum weight $\beta = \{0.1, 0.3, 0.5, 0.7, 0.9\}$ with different privacy budgets $\epsilon = \{2, 4, 6, 8\}$ on MNIST and CIFAR-10 using CNN-5.}
    \label{fig.beta}
\end{figure}

However, in different $\epsilon$ values, the variation between different test accuracy values of different $\beta$ settings remains below 0.3\%, indicating that MNIST is relatively insensitive to the choice of $\beta$. For CIFAR-10, a smaller $\beta$ value is preferred when $\epsilon$ is high ($\epsilon = 8$). This allows our approach to focus more on current gradients during iterations. In contrast, a larger $\beta$ value is beneficial when $\epsilon$ is low ($\epsilon = 2$) which can balance the weights of all historical gradients to better mitigate the effects of DP noise.

Figure~\ref{fig.k} shows the performance across different datasets and privacy levels with $k$ ranging from 2 to 6. $k = 2$ yields the best performance for MNIST, while $k = 4$ is optimal for CIFAR-10. This shows that the selection of $k$ requires balancing the reduction of sampling variance with the reliance on historical gradients.

\begin{table*}[!t]
\centering
\caption{Test accuracy (\%) for eight different Low-pass Filter coefficients with different privacy budgets $\epsilon = \{2, 4, 6, 8\}$ on MNIST and CIFAR-10 using CNN-5. In the last row, the filter coefficient indicates there is no Low-pass Filter.}
\label{tab.coef}
\resizebox{\textwidth}{!}{%
\begin{tabular}{l|cccc|cccc}
\hline
 & \multicolumn{4}{c|}{MNIST} & \multicolumn{4}{c}{CIFAR-10}\\
\cline{2-9}
\textbf{Filter Coefficients} & $\epsilon=2$ & $\epsilon=4$ & $\epsilon=6$ & $\epsilon=8$
                & $\epsilon=2$ & $\epsilon=4$ & $\epsilon=6$ & $\epsilon=8$\\
\hline
\begin{tabular}[c]{@{}l@{}}a:\{\,-0.6\,\}, b:\{0.4\}\end{tabular}& 
91.48$\pm$0.03 & 91.59$\pm$0.07 & 91.62$\pm$0.05 & 91.54$\pm$0.02 &
58.40$\pm$1.27 & 63.19$\pm$0.29 & \textbf{64.43$\pm$0.16} & \textbf{65.69$\pm$0.34} \\
\hline
\begin{tabular}[c]{@{}l@{}}a:\{\,-0.6\,\}, b:\{0.2, 0.2\}\end{tabular}& 
90.63$\pm$0.01 & 90.61$\pm$0.01 & 90.60$\pm$0.03 & 90.56$\pm$0.06 &
59.16$\pm$0.85 & 63.23$\pm$0.53 & 63.68$\pm$0.20 & 65.10$\pm$0.91 \\
\hline
\begin{tabular}[c]{@{}l@{}}a:\{\,-0.6\,\}, b:\{0.5, -0.1\}\end{tabular}& 
90.61$\pm$0.04 & 90.60$\pm$0.03 & 90.57$\pm$0.02 & 90.56$\pm$0.02 &
60.12$\pm$0.42 & \textbf{63.48$\pm$0.48} & 63.94$\pm$1.38 & 64.76$\pm$0.80 \\
\hline
\begin{tabular}[c]{@{}l@{}}a:\{\,-0.9\,\}, b:\{0.1\}\end{tabular}& 
\textbf{92.38$\pm$0.05} & \textbf{92.49$\pm$0.01} & \textbf{92.36$\pm$0.07} & \textbf{92.45$\pm$0.06} &
57.65$\pm$0.65 & 58.71$\pm$0.86 & 62.35$\pm$0.55 & 63.58$\pm$0.39 \\
\hline
\begin{tabular}[c]{@{}l@{}}a:\{\,-0.9\,\}, b:\{0.05, 0.05\}\end{tabular}& 
90.58$\pm$0.05 & 90.62$\pm$0.02 & 90.61$\pm$0.01 & 90.63$\pm$0.01 &
58.02$\pm$0.60 & 62.02$\pm$0.66 & 62.89$\pm$1.34 & 63.21$\pm$0.38 \\
\hline
\begin{tabular}[c]{@{}l@{}}a:\{\,-0.9\,\}\, b:\{0.15, -0.05\}\end{tabular}& 
90.59$\pm$0.05 & 90.61$\pm$0.02 & 90.56$\pm$0.03 & 90.64$\pm$0.03 &
\textbf{60.47$\pm$0.59} & 62.55$\pm$0.68 & 63.86$\pm$0.57 & 64.58$\pm$0.08 \\
\hline
\begin{tabular}[c]{@{}l@{}}a:\{\,-0.7, -0.2\,\}, b:\{0.05, 0.05\}\end{tabular}& 
90.66$\pm$0.03 & 90.60$\pm$0.04 & 90.61$\pm$0.03 & 90.60$\pm$0.01 &
57.74$\pm$0.56 & 62.14$\pm$0.90 & 62.20$\pm$0.94 & 62.97$\pm$0.66 \\
\hline
\begin{tabular}[c]{@{}l@{}}a:\{\,\,\}, b:\{1\}\end{tabular} & 
90.60$\pm$0.02 & 90.63$\pm$0.02 & 90.57$\pm$0.02 & 90.59$\pm$0.04 &
59.91$\pm$0.88 & 62.05$\pm$0.46 & 63.77$\pm$0.66 & 64.57$\pm$0.68 \\
\hline
\end{tabular}
}
\end{table*}

Table~\ref{tab.coef} shows the comparison among eight different coefficient settings. All settings satisfy the constraint defined in~\Cref{eq.constraint}. For MNIST, the setting with $a:\{-0.9\}, b:\{0.1\}$ consistently outperforms others, achieving approximately 92.4\% accuracy across all privacy budgets. This is because the filter provides a relatively accurate gradient descent direction with strong historical dependence to accelerate convergence.
\begin{figure}[t!]
    \centering
    \includegraphics[width=0.5\textwidth]{./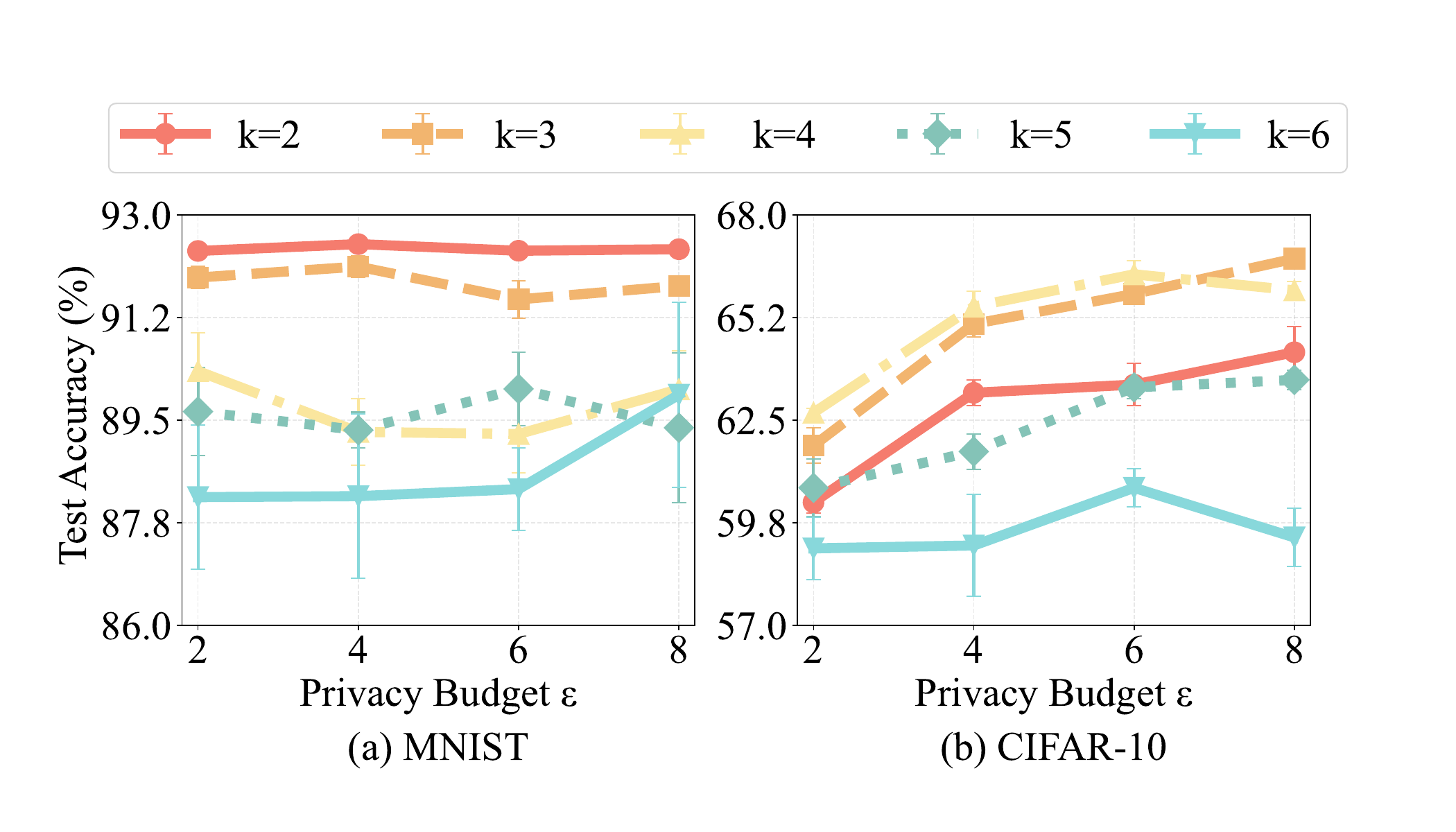}
    \caption{Test accuracy (\%) for different momentum length $k = \{2, 3, 4, 5, 6\}$ with different privacy budgets $\epsilon = \{2, 4, 6, 8\}$ on MNIST and CIFAR-10 using CNN-5.}
    \label{fig.k}
\end{figure}

In contrast, for CIFAR-10, the optimal setting varies with different values of the privacy budget $\epsilon$. Under high DP noise regimes (low $\epsilon$ values, $\epsilon\le 4$), second-order filters with mixed-sign coefficients, particularly $a:\{-0.6\}, b:\{0.5, -0.1\}$ and $a:\{-0.9\}, b:\{0.15, -0.05\}$, demonstrate superior performance. These settings create more selective frequency responses that better match the gradient information of complex image datasets, thus improving the signal-to-noise ratio. However, all settings where $b$ values are positive (such as $a:\{-0.9\}, b:\{0.05, 0.05\}$) perform consistently worse on CIFAR-10, as they over-smooth the gradient signal and eliminate potentially useful information.

For higher privacy budgets ($\epsilon = 8$), the setting $a:\{-0.6\}, b:\{0.4\}$ achieves the best performance on CIFAR-10 with 65.69\% accuracy, suggesting that when DP noise is less, a simpler first-order filter with moderate historical dependency provides an optimal balance between noise suppression and preservation of gradient signals. 

\section{Notations}
Table~\ref{tab.n} summarizes the notation, divided into three parts: basic symbols, algorithm-specific terms, and additional notations for our theoretical analysis.
\begin{table}[h!]
\caption{Summary of Notations}
\label{tab.n}
\resizebox{\columnwidth}{!}{%
\begin{tabular}{cl}
\toprule
\textbf{Notation} & \textbf{Description} \\
\midrule
$D = \{\xi_i\}_{i=1}^n$ & Dataset \\
$x$ & Model parameters \\
$\eta$ & Learning rate \\
$B$ & Batch size \\
$T$ & Total number of iterations \\
$\epsilon$ & Privacy budget\\
$\delta$ & Failure factor\\
$C$ & Clipping threshold \\
$\sigma_{DP}$ & Standard variance of DP noise \\
$d$ & Dimension of model parameters \\
\midrule
$\beta$ & Per-sample momentum weight \\
$c_\beta$ & Momentum normalization constant \\
$k$ & Per-sample momentum length \\
$a = \{a_r\}_{r=1}^{n_a}$ & Low-pass filter parameters for historical terms \\
$b = \{b_r\}_{r=0}^{n_b}$ & Low-pass filter parameters for momentum terms \\
$v_t^{(\xi)}$ & Per-sample momentum for sample $\xi$ at iteration $t$ \\
$\tilde{v}_t^{(\xi)}$ & Clipped per-sample momentum \\
$\bar{v}_t$ & Averaged (and noised) momentum across batch \\
$m_t$ & Filtered momentum \\
$\hat{m}_t$ & Normalized filtered momentum \\
$c_{m,t}$ & Filter normalization term at iteration $t$\\
\midrule
$L$ & Lipschitz constant of smoothness\\
$\sigma_{SGD}$ & Bounded sampling variance\\
$G$ & Bounded gradient norm \\
$\{c_r\},\{c_{-r}\}$ & Correlation coefficients \\
$\hat{c}_r$ & Correlation coefficient after per-sample momentum \\
$\zeta_i^{(\xi)}$ & Sampling noise for sample $\xi$ at iteration $t$ \\
$\kappa_r$ & Filter coefficient for each momentum \\
$\hat{\kappa}_r$ & Normalized filter coefficient for each momentum \\
$\rho$ & Reduced sampling variance ratio \\
$f^*$ & Lower bound of the ERM problem\\
\bottomrule
\end{tabular}
}
\end{table}
\end{document}